\newtheorem{theorem}{Theorem}
\newtheorem{lemma}{Lemma}
\begin{document}

\title{Convexified Convolutional Neural Networks}

\author{
Yuchen Zhang\footnote{\footnotesize Computer Science Department, Stanford University, Stanford, CA 94305. Email: {\tt zhangyuc@cs.stanford.edu}.}
\qquad
Percy Liang\footnote{\footnotesize Computer Science Department, Stanford University, Stanford, CA 94305. Email: {\tt pliang@cs.stanford.edu}.}
\qquad
Martin J. Wainwright\footnote{\footnotesize Department of Electrical Engineering and Computer Science and Department of Statistics, University of California Berkeley, Berkeley, CA 94720. Email: {\tt wainwrig@eecs.berkeley.edu}.}
}

\maketitle

\begin{abstract}
We describe the class of convexified convolutional neural networks
(CCNNs), which capture the parameter sharing of convolutional neural
networks in a convex manner. By representing the nonlinear
convolutional filters as vectors in a reproducing
kernel Hilbert space, the CNN parameters can be represented as a low-rank
matrix, which can be relaxed to obtain a convex optimization
problem.  For learning two-layer convolutional neural networks, we
prove that the generalization error obtained by a convexified CNN
converges to that of the best possible CNN.  For learning deeper
networks, we train CCNNs in a layer-wise manner. Empirically, CCNNs
achieve performance competitive with CNNs trained by backpropagation, SVMs,
fully-connected neural networks, stacked denoising auto-encoders, and
other baseline methods.
\end{abstract}

\renewcommand*{\thefootnote}{\arabic{footnote}}
\setcounter{footnote}{0}


\section{Introduction}

Convolutional neural networks (CNNs)~\cite{lecun1998gradient} have
proven successful across many tasks in machine learning and artificial
intelligence, including image
classification~\cite{lecun1998gradient,krizhevsky2012imagenet}, face
recognition~\cite{lawrence1997face}, speech
recognition~\cite{hinton2012deep}, text
classification~\cite{wang2012end}, and game
playing~\cite{mnih2015human,silver2016mastering}.  There are two principal advantages
of a CNN over a fully-connected neural network:
(i) sparsity---that each nonlinear convolutional filter acts only on a local patch of the input,
and (ii) parameter sharing---that the same filter is applied to each patch.

However, as with most neural networks, the standard approach to
training CNNs is based on solving a nonconvex optimization problem
that is known to be NP-hard~\cite{blum1992training}.  In practice,
researchers use some flavor of stochastic gradient method, in which
gradients are computed via backpropagation~\cite{bottou1998online}.
This approach has two drawbacks: (i) the rate of convergence, which is
at best only to a local optimum, can be slow due to nonconvexity (for
instance, see the paper~\cite{fahlman1988empirical}), and (ii) its
statistical properties are very difficult to understand, as the actual
performance is determined by some combination of the CNN architecture
along with the optimization algorithm.



In this paper, with the goal of addressing these two challenges, we
propose a new model class known as \emph{convexified convolutional
  neural networks} (CCNNs).  These models have two desirable features.
First, training a CCNN corresponds to a convex optimization problem,
which can be solved efficiently and optimally via a projected
gradient algorithm.  Second, the statistical properties of CCNN models
can be studied in a precise and rigorous manner.  We obtain CCNNs by
convexifying two-layer CNNs; doing so requires overcoming two
challenges. First, the activation function of a CNN is nonlinear.  In
order to address this issue, we relax the class of CNN filters to a
reproducing kernel Hilbert space (RKHS).  This approach is inspired by
our earlier work~\cite{zhang2015ell_1}, involving a subset of the
current authors, in which we developed this relaxation step for
fully-connected neural networks.  Second, the parameter sharing
induced by CNNs is crucial to its effectiveness and must be preserved.
We show that CNNs with RKHS filters can be parametrized by a low-rank
matrix.  Further relaxing the low-rank constraint to a nuclear norm
constraint leads to our final formulation of CCNNs.

On the theoretical front, we prove an \emph{oracle inequality} on
generalization error achieved by our class of CCNNs, showing that it
is upper bounded by the best possible performance achievable by a
two-layer CNN given infinite data---a quantity to which we refer as
the oracle risk---plus a model complexity term that decays to zero
polynomially in the sample size.  Our results show that the sample
complexity for CCNNs is significantly lower than that of the
convexified fully-connected neural network~\citep{zhang2015ell_1},
highlighting the importance of parameter sharing.  For models with
more than one hidden layer, our theory does not apply, but we provide
encouraging empirical results using a greedy layer-wise training
heuristic. We then apply CCNNs to the MNIST handwritten digit dataset
as well as four variation datasets~\cite{WinNT}, and find that it
achieves the state-of-the-art performance. On the CIFAR-10 dataset,
CCNNs outperform CNNs of the same depths, as well as other
baseline methods that do not involve nonconvex optimization. We also
demonstrate that building CCNNs on top of existing CNN filters
improves the performance of CNNs.

The remainder of this paper is organized as follows.  We begin in
Section~\ref{sec:setup} by introducing convolutional neural networks,
and setting up the empirical risk minimization problem studied in this
paper.  In Section~\ref{sec:single-layer-algorithm}, we describe the
algorithm for learning two-layer CCNNs, beginning with the
simple case of convexifying CNNs with a linear activation function,
then proceeding to convexify CNNs with a nonlinear activation.  We
show that the generalization error of a CCNN converges to that of the
best possible CNN. In Section~\ref{sec:multi}, we describe several
extensions to the basic CCNN algorithm, including averaging pooling,
multi-channel input processing, and the layer-wise learning of
multi-layer CNNs. In Section~\ref{sec:experiment}, we report the
empirical evaluations of CCNNs. We survey related work in
Section~\ref{sec:related-work} and conclude the paper in
Section~\ref{sec:conclusion}.

\myparagraph{Notation} For any positive integer $n$, we use $[n]$ as a
shorthand for the discrete set $\{1,2,\dots, n\}$.  For a rectangular
matrix $A$, let $\lncs{A}$ be its nuclear norm, $\ltwos{A}$ be its
spectral norm (i.e., maximal singular value), and $\lfs{A}$ be its
Frobenius norm. We use $\ell^2(\N)$ to denote the set of countable dimensional
vectors $v = (v_1,v_2,\dots)$ such that $\sum_{\ell=1}^\infty v_\ell^2 < \infty$. For any vectors $u, v\in \ell^2(\N)$, the inner product $\inprod{u}{v} \defeq \sum_{\ell=1}^\infty u_iv_i$ and the $\ell_2$-norm $\ltwos{u} \defeq \sqrt{\inprod{u}{u}}$ are well defined.


\section{Background and problem set-up}
\label{sec:setup}

In this section, we formalize the class of convolutional neural
networks to be learned and describe the associated nonconvex
optimization problem.


\subsection{Convolutional neural networks.}
\label{sec:setup-cnn}

At a high level, a two-layer CNN\footnote{Average pooling and multiple
  channels are also an integral part of CNNs, but these do not present any new
  technical challenges, so that we defer these extensions to
  Section~\ref{sec:multi}.}  is a particular type of function that
maps an input vector $x \in \R^{d_0}$ (e.g., an image) to an output
vector in $y \in \R^{d_2}$ (e.g., classification scores for the $d_2$
classes).  This mapping is formed in the following manner:
\begin{itemize}[leftmargin = *]
\item First, we extract a collection of $P$ vectors $\{z_p(x)
  \}_{j=1}^P$ of the full input vector $x$.  Each vector $z_p(x)
  \in \real^{d_1}$ is referred to as a \emph{patch}, and these patches
    may depend on overlapping components of $x$.
\item Second, given some choice of activation function $\sigma: \real
  \rightarrow \real$ and a collection of weight vectors
  $\{w_j\}_{j=1}^r$ in $\real^{d_1}$, we compute the functions
\begin{align}
\label{eqn:def-filter}
h_j(z) \defeq \sigma(w_j^\top z)\quad \mbox{for each patch $z \in
  \R^{d_1}$.}
\end{align}
    Each function $h_j$ (for $j \in [r]$) is known as a \emph{filter}, and note that the
same filters are applied to each patch---this corresponds to the
\emph{parameter sharing} of a CNN.
\item Third, for each patch index $p \in [P]$, filter index $j \in
  [r]$, and output coordinate $\outindex \in [d_2]$, we introduce a
  coefficient $\alpha_{\outindex,j,p} \in \real$ that governs the
  contribution of the filter $h_j$ on patch $z_p(x)$ to output
  $f_\outindex(x)$.  The final form of the CNN is given by $f(x) \defn
  (f_1(x), \dots, f_{d_2}(x))$, where the $\outindex^{th}$ component
  is given by
\begin{align}
\label{eqn:def-cnn}
f_\outindex(x) \defeq \sum_{j=1}^{r} \sum_{p=1}^{P}
\alpha_{\outindex,j,p} h_j(z_p(x)).
\end{align}
\end{itemize}

Taking the patch functions $\{z_p\}_{p=1}^P$ and activation function
$\sigma$ as fixed, the parameters of the CNN are the filter vectors
$\wbold \defeq \{ w_j \in \real^{d_1} :\; j \in [r] \}$ along
with the collection of coefficient vectors \mbox{$\alphabold \defeq \{
  \alpha_{\outindex,j} \in \real^{P}:\; \outindex \in [d_2], j
  \in [r]\}$.}  We assume that all patch vectors $z_p(x) \in \R^{d_1}$
are contained in the unit $\ell_2$-ball. This assumption can be satisfied
without loss of generality
by normalization:~By multiplying a constant $\gamma > 0$ to every patch $z_p(x)$
and multiplying $1/\gamma$ to the filter vectors $\wbold$, the assumption
will be satisfied without changing the
the output of the network. 

Given some positive radii
$\bou_1$ and $\bou_2$, we consider the model class
\begin{align}
\cnn(\bou_1, \bou_2) & \defeq \Big \{ \mbox{$f$ of the
  form~\eqref{eqn:def-cnn}} :\; \mbox{$\max \limits_{j \in [r]}
  \|w_j\|_2 \leq \bou_1$ and $\max \limits_{\outindex \in [d_2], j \in
    [r]} \|\alpha_{\outindex, j}\|_2 \leq \bou_2$} \Big \}.
\end{align}
When the radii $(\bou_1, \bou_2)$ are clear from context, we adopt
$\cnn$ as a convenient shorthand. 

\subsection{Empirical risk minimization.}

Given an input-output pair $(x,y)$ and a CNN $f$, we let $\Loss(f(x);
y)$ denote the loss incurred when the output $y$ is predicted via
$f(x)$.  We assume that the loss function $\Loss$ is convex and $L$-Lipschitz
in its first argument given any value of its second argument.  As a concrete example, for multiclass
classification with $d_2$ classes, the output vector $y$ takes values
in the discrete set $[d_2] = \{1, 2, \ldots, d_2 \}$.  For example, given a vector
$f(x) = (f_1(x), \ldots, f_{d_2}(y)) \in \real^{d_2}$ of
classification scores, the associated multiclass logistic loss for a pair $(x,y)$
is given by $\Loss(f(x); y) \defeq - f_y(x) + \log \big(
\sum_{y'=1}^{d_2} \exp(f_{y'}(x))\big)$.

Given $n$ training examples $\{(x_i,y_i)\}_{i=1}^n$, we would like to
compute an empirical risk minimizer
\begin{align}
\label{eqn:empirical-risk}
\fcnn \in \arg \min_{f \in \cnn} \sum_{i=1}^n \Loss(f(x_i); y_i).
\end{align}
Recalling that functions $f \in \cnn$ depend on the parameters
$\wbold$ and $\alphabold$ in a highly nonlinear
way~\eqref{eqn:def-cnn}, this optimization problem is nonconvex.  As
mentioned earlier, heuristics based on stochastic
gradient methods are used in practice, which makes it challenging to gain a
theoretical understanding of their behavior.  Thus, in the
next section, we describe a relaxation of the class $\cnn$ that allows
us to obtain a convex formulation of the associated empirical risk
minimization problem.

\section{Convexifying CNNs}
\label{sec:single-layer-algorithm}

We now turn to the development of the class of convexified CNNs.  We
begin in Section~\ref{sec:linear} by illustrating the procedure for
the special case of the linear activation function.  Although the
linear case is not of practical interest, it provides intuition for
our more general convexification procedure, described in
Section~\ref{sec:nonlinear}, which applies to nonlinear activation
functions.  In particular, we show how embedding the nonlinear problem into an
appropriately chosen reproducing kernel Hilbert space (RKHS) allows us
to again reduce to the linear setting.


\subsection{Linear activation functions: low rank relaxations}
\label{sec:linear}

In order to develop intuition for our approach, let us begin by
considering the simple case of the linear activation function
$\sigma(t) = t$.  In this case, the filter $h_j$ when applied to the
patch vector $z_p(x)$ outputs a Euclidean inner product of the form
$h_j(z_p(x)) = \inprod{z_p(x)}{w_j}$.  For each $x \in \real^{d_0}$,
we first define the $P \times d_1$-dimensional matrix
\begin{align}
\label{EqnDefnZ}
Z(x) & \defeq \begin{bmatrix} z_1(x)^\top \\ \vdots \\ z_P(x)^\top
\end{bmatrix}.
\end{align}
We also define the $P$-dimensional vector $\alpha_{\outindex,j} \defeq
(\alpha_{\outindex,j,1},\dots,\alpha_{\outindex,j,P})^\top$.  With
this notation, we can rewrite equation \eqref{eqn:def-cnn} for the
$\outindex^{th}$ output as
\begin{align}
\label{eqn:simple-model-output}
f_\outindex(x) = \sum_{j=1}^r \sum_{p=1}^P \alpha_{\outindex,j,p}
\inprod{z_p(x)}{ w_j} = \sum_{j=1}^r \alpha_{\outindex,j}^\top Z(x)
w_j = \tr\Big(Z(x) \Big(\sum_{j=1}^r w_j \alpha_{\outindex,j}^\top
\Big)\Big) = \tr(Z(x) A_\outindex),
\end{align}
where in the final step, we have defined the $d_1 \times
P$-dimensional matrix $A_\outindex \defeq \sum_{j=1}^r w_j
\alpha_{\outindex,j}^\top$.  Observe that $f_\outindex$ now depends
linearly on the matrix parameter $A_\outindex$.  Moreover, the matrix
$A_\outindex$ has rank at most $r$, due to the parameter sharing of
CNNs. See Figure~\ref{fig:framework} for a graphical illustration of
this model structure.

\begin{figure}[t]
\begin{center}
\includegraphics[scale=0.5]{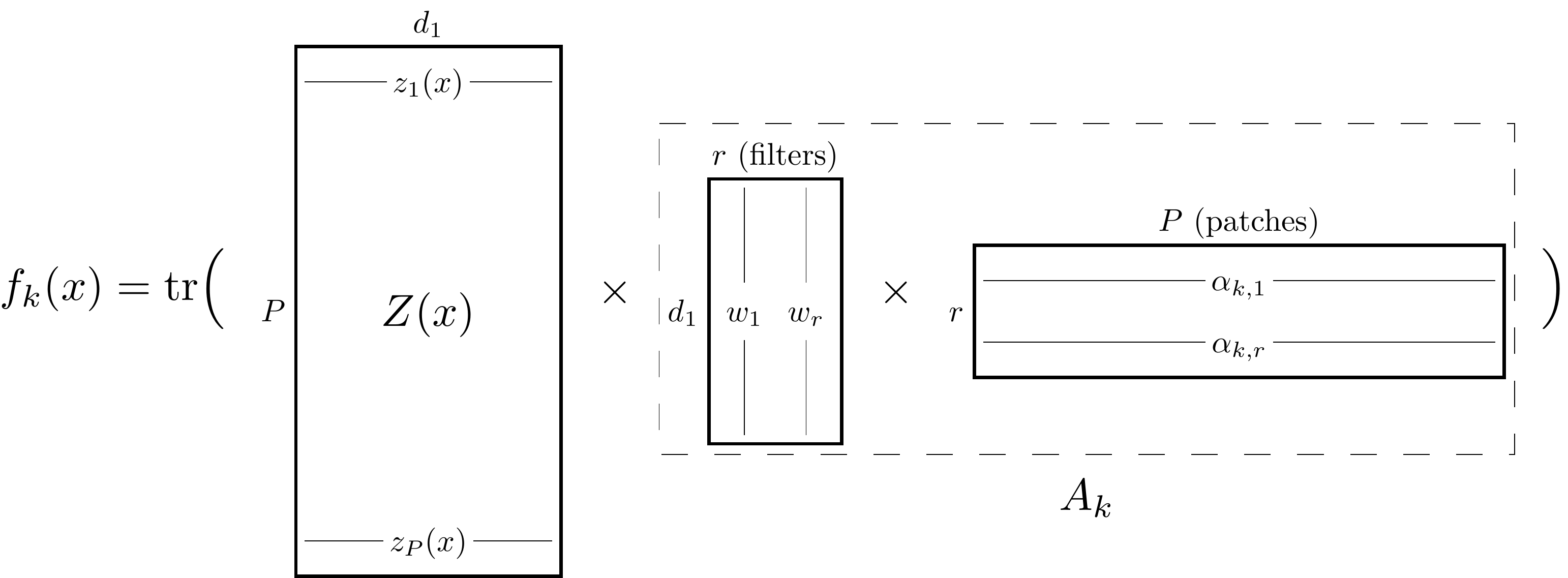}
\end{center}
\caption{\label{fig:framework}
  The $\outindex^{th}$ output of a
  CNN $f_\outindex(x) \in \real$ can be expressed as the product
  between a matrix $Z(x) \in \real^{P \times d_1}$ whose rows are
  features of the input patches and a rank-$r$ matrix $A_\outindex \in
  \real^{d_1 \times P}$, which is made up of the filter weights
  $\{w_j\}$ and coefficients $\{a_{\outindex,j,p}\}$, as illustrated.
  Due to the parameter sharing intrinsic to CNNs, the matrix
  $A_\outindex$ inherits a low rank structure, which can be enforced
  via convex relaxation using the nuclear norm.}
\end{figure}

Letting $A \defeq (A_1, \dots, A_{d_2})$ be a concatenation of these
matrices across all $d_2$ output coordinates, we can then define a
function $f^A: \real^{d_1} \rightarrow \real^{d_2}$ of the form
\begin{align}
\label{EqnFAform}
f^A(x) & \defeq (\tr(Z(x) A_1), \dots, \tr(Z(x) A_{d_2})).
\end{align}
Note that these functions have a linear parameterization in terms of
the underlying matrix $A$.  Our model class corresponds to a
collection of such functions based on imposing certain constraints on
the underlying matrix $A$: in particular, we define
\begin{align*}
\cnn(\bou_1, \bou_2) & \defeq \Big\{ f^A :\;
\underbrace{\mbox{$\max \limits_{j \in [r]} \ltwos{w_j} \leq B_1$ and
    $\max \limits_{\substack{\outindex \in [d_2] \\ j \in [r]}}
    \ltwos{\alpha_{\outindex,j}} \leq B_2$}}_{\mbox{Constraint (C1)}}
\quad \mbox{and} \underbrace{\mbox{$\text{rank}(A) =
    r$}}_{\mbox{Constraint (C2)}} \Big \}.
\end{align*}
This is simply an alternative formulation of our original class of CNNs.
Notice that if the filter weights $w_j$ are not shared across
all patches, then the constraint (C1) still holds, but constraint (C2)
no longer holds. Thus, the parameter sharing of CNNs is realized by
the low-rank constraint (C2).  The matrix $A$ of rank $r$ can be
decomposed as $A = U V^\top$, where both $U$ and $V$ have $r$
columns. The column space of matrix $A$ contains the convolution
parameters $\{w_j\}$, and the row space of $A$ contains to the output
parameters $\{\alpha_{\outindex,j}\}$.

The rank-$r$ matrices satisfying constraints (C1) and (C2) form a
nonconvex set.  A standard convex relaxation of a rank constraint is
based on the nuclear norm $\nucnorm{A}$ corresponding to the sum of the singular values of $A$.  It
is straightforward to verify that any matrix $A$ satisfying the
constraints (C1) and (C2) must have nuclear norm bounded as
$\nucnorm{A} \leq B_1 B_2 r \sqrt{d_2}$.  Consequently, if we define
the function class
\begin{align}
\label{eqn:simple-model-nuclear-constraint}
\rcnn & \defeq \Big \{ f^{A} :\; \nucnorm{A} \leq B_1 B_2 r
\sqrt{d_2} \Big \},
\end{align}
then we are guaranteed that $\rcnn \supseteq \cnn$.  

Overall, we propose to minimize the empirical
risk~\eqref{eqn:empirical-risk} over $\rcnn$ instead of $\cnn$; doing
so defines a convex optimization problem over this richer class of
functions
\begin{align}
  \label{eqn:obj-func}
\frcnn \defeq \arg\min_{f^A \in \rcnn} \sum_{i=1}^n \Loss(f^A(x_i); y_i).
\end{align}
In Section~\ref{sec:algorithm}, we describe iterative algorithms that
can be used to solve this form of convex program in the more general
setting of nonlinear activation functions.


\subsection{Nonlinear activations: RKHS filters}
\label{sec:nonlinear}

For nonlinear activation functions $\sigma$, we relax the class of
CNN filters to a reproducing kernel Hilbert space (RKHS).  As we show,
this relaxation allows us to reduce the problem to the linear
activation case.

Let $\kf : \R^{d_1} \times \R^{d_1} \to \R$ be a positive semidefinite
kernel function. For particular choices of kernels (e.g.,~the Gaussian RBF kernel) and
some sufficiently smooth activation function~$\sigma$, we are able to show that
the filter $h:z \mapsto \sigma (\inprod{w}{z})$ is contained in the RKHS induced by the kernel
function $\kf$. See Section~\ref{sec:theory} for the choice of the kernel function and the activation function. Let $S \defeq \{ z_p(x_i) : p \in [P], i \in [n] \}$
be the set of patches in the training dataset.
The representer theorem then implies that for any patch $z_p(x_i) \in S$,
the function value can be represented by
\begin{align}
\label{eqn:representer-theorem-finite-basis}
h(z_p(x_i)) = \sum_{(i',p')\in [n]\times [P]} c_{i',p'} k(z_p(x_i),z_{p'}(x_{i'}))
\end{align}
for some coefficients $\{c_{i',p'}\}_{(i',p')\in [n]\times [P]}$.
Filters taking the
form~\eqref{eqn:representer-theorem-finite-basis} are members of the RKHS, because they are linear combinations of basis functions $z\mapsto k(z,z_{p'}(x_{i'}))$. 
Such filters are parametrized by a finite set of coefficients, which can be
estimated via empirical risk minimization.


Let $K \in \R^{n P\times n P}$ be the symmetric kernel matrix, where
with rows and columns indexed by the example-patch index pair $(i,p)
\in [n] \times [P]$.  The entry at row $(i,p)$ and column $(i',p')$ of
matrix $K$ is equal to $\kf(z_p(x_i), z_{p'}(x_{i'}))$.  So as to
avoid re-deriving everything in the kernelized setting, we perform a
reduction to the linear setting of Section~\ref{sec:linear}.  Consider
a factorization $K = Q Q^\top$ of the kernel matrix, where $Q \in
\R^{nP\times m}$; one example is the Cholesky factorization with $m =
nP$. We can interpret each row $Q_{(i,p)} \in \R^m$ as a feature
vector in place of the original $z_p(x_i) \in \R^{d_1}$, and rewrite
equation~\eqref{eqn:representer-theorem-finite-basis} as
\begin{align*}
h(z_p(x_i)) = \langle Q_{(i,p)}, w \rangle \quad \mbox{where}
\quad w \defeq \sum_{(i',p')} c_{i',p'} Q_{(i',p')}.
\end{align*}
In order to learn the filter $h$, it suffices to learn the
$m$-dimensional vector $w$.  To do this, define patch matrices
$Z(x_i)\in \R^{P\times m}$ for each $i\in [n]$ so that its $p$-th
row is $Q_{(i,p)}$.
Then we carry out all of Section~\ref{sec:linear};
solving the ERM gives us a parameter matrix $A \in \R^{m \times P
  d_2}$.  The only difference is that the $\bou_1$ norm constraint
needs to be relaxed as well. See
Appendix~\ref{sec:nonlinear-activation} for details.

At test time, given a new input $x \in \R^{d_0}$, we can compute a
patch matrix $Z(x) \in \R^{P \times m}$ as follows:
\begin{itemize}[leftmargin=*]
\item The $p$-th row of this matrix is the feature vector for patch
  $p$, which is equal to $Q^\dagger v(z_p(x)) \in \R^m$. Here, for any
  patch $z$, the vector $v(z)$ is defined as a $nP$-dimensional vector
  whose $(i,p)$-th coordinate is equal to $\kf(z,z_p(x_i))$. We note
	that if $x$ is an instance $x_i$ in the training set, then the
	vector $Q^\dagger v(z_p(x))$ is exactly equal to $Q_{(i,p)}$. Thus 
	the mapping $Z(x)$ applies to both training and testing.
	
\item We can then compute the predictor $f_\outindex(x) = \tr(Z(x)
  A_\outindex)$ via equation~\eqref{eqn:simple-model-output}. Note
  that we do not explicitly need to compute the filter values
  $h_j(z_p(x))$ to compute the output under the CCNN.
\end{itemize}

\myparagraph{Retrieving filters} However, when we learn multi-layer
CCNNs, we need to compute the filters explicitly.  Recall from
Section~\ref{sec:linear} that the column space of matrix $A$
corresponds to parameters of the convolutional layer, and the row space
of $A$ corresponds to parameters of the output layer. Thus, once we
obtain the parameter matrix $A$, we compute a rank-$r$
  approximation $A \approx \widehat U \widehat V^\top$. Then
set the $j$-th filter $h_j$ to the mapping
\begin{align}
\label{eqn:filter-recovery}
z \mapsto \inprod{ \widehat U_j}{Q^\dagger v(z)} \quad \mbox{for any patch $z\in
  \R^{d_1}$,}
\end{align}
where $\widehat U_j \in \R^m$ is the $j$-th column of matrix $\widehat U$, and
$Q^\dagger v(z)$ represents the feature vector for patch~$z$. The matrix $\widehat V^\top$ encodes parameters of the output layer, thus 
doesn't appear in the filter expression~\eqref{eqn:filter-recovery}. It is important to note that the filter retrieval is not unique, because the rank-$r$ approximation of the matrix $A$ is not unique. One feasible way is to form the singular value decomposition \mbox{$A =  U \Lambda V^\top$}, then define $\widehat U$ to be the first $r$ columns of $U$, and define $\widehat V^\top$ to be the first $r$ rows of $\Lambda V^\top$. 

When we apply all of the $r$ filters to all patches of an input $x\in \R^{d_0}$,
the resulting output is $H(x) \defeq \widehat U^\top (Z(x))^\top$ --- this is an $r \times P$
matrix whose element at row $j$ and column $p$ is equal
to~$h_j(z_p(x))$.


\subsection{Algorithm}
\label{sec:algorithm}

\begin{algorithm}[t]
\begin{flushleft}
{\bf Input: }{Data $\{(x_i,y_i)\}_{i=1}^n$, kernel function $\kf$,
  regularization parameter $R > 0$, number of filters $r$.}
\begin{enumerate}
\item Construct a kernel matrix $K\in \R^{nP\times nP}$ such that the
  entry at column $(i,p)$ and row $(i',p')$ is equal to $\kf(z_p(x_i),
  z_{p'}(x_{i'}))$. Compute a factorization $K = QQ^\top$ or an
  approximation $K \approx QQ^\top$, where $Q\in \R^{nP\times m}$.
\item For each $x_i$, construct patch matrix $Z(x_i)\in \R^{P\times
  m}$ whose $p$-th row is the $(i,p)$-th row of $Q$,
  where $Z(\cdot)$ is defined in Section~\ref{sec:nonlinear}.
\item Solve the following optimization problem to obtain a matrix
  $\Ahat = (\Ahat_1,\dots, \Ahat_{d_2})$:
\begin{align}
\label{eqn:alg-obj-func}
\Ahat & \in \underset{\nucnorm{A}\leq R}{\rm argmin}\; \LossTil(A)
\mbox{~~where~~}
\LossTil(A) \defeq \sum_{i=1}^n  \Loss\Big(\big(\tr(Z(x_i)A_1),
\dots, \tr(Z(x_i)A_{d_2})\big); y_i\Big).
\end{align}
\item Compute a rank-$r$ approximation $\Atil \approx \widehat U \widehat V^\top$
where $\widehat U\in \R^{m \times r}$ and $\widehat V \in \R^{P d_2 \times
    r}$.
\end{enumerate}

{\bf Output:} Return the predictor $\frcnn(x) \defeq
\big(\tr(Z(x)\Ahat_1), \dots, \tr(Z(x)\Ahat_{d_2})\big)$ and the
convolutional layer output $H(x) \defeq \widehat U^\top (Z(x))^\top$.
\end{flushleft}
\caption{Learning two-layer CCNNs}\label{alg:two-layer-ccnn}
\end{algorithm}

The algorithm for learning a two-layer CCNN is summarized in
Algorithm~\ref{alg:two-layer-ccnn}; it is a formalization of the steps
described in Section~\ref{sec:nonlinear}. In order to solve the
optimization problem~\eqref{eqn:alg-obj-func}, the simplest approach
is to via projected gradient descent:~At iteration $t$, using a step
size $\etastep{\step} > 0$, it forms the new matrix $\Amat{\step +1}$
based on the previous iterate $\Amat{\step}$ according to:
\begin{align}
\Amat{\step+1} = \ProjR \Big( \Amat{\step} - \etastep{\step} \;
\nabla_A \LossTil (\Amat{\step}) \Big).
\end{align}
Here $\nabla_A \LossTil$ denotes the gradient of the objective
function defined in~\eqref{eqn:alg-obj-func}, and $\ProjR$ denotes the Euclidean projection
onto the nuclear norm ball $\{A : \lncs{A} \leq R\}$.  This
nuclear norm projection can be obtained by first computing the
singular value decomposition of $A$, and then projecting the vector of
singular values onto the $\ell_1$-ball. This latter projection step
can be carried out efficiently by the algorithm
of~\citet{duchi2008efficient}. There are other efficient optimization
algorithms for solving the problem~\eqref{eqn:alg-obj-func}, such as the
proximal adaptive gradient method~\cite{duchi2011adaptive} and the
proximal SVRG method~\cite{xiao2014proximal}. All these algorithms can
be executed in a stochastic fashion, so that each gradient step
processes a mini-batch of examples.

The computational complexity of each iteration depends on the width
$m$ of the matrix $Q$.  Setting $m = nP$ allows us to solve the exact
kernelized problem, but to improve the computation efficiency, we can
use Nystr{\"o}m approximation~\cite{drineas2005nystrom} or random
feature approximation~\cite{rahimi2007random}; both are randomized
methods to obtain a tall-and-thin matrix $Q \in \R^{nP\times m}$ such
that $K \approx Q Q^\top$. Typically, the parameter $m$ is chosen to
be much smaller than $n P$. In order to compute the matrix $Q$, the
Nystr{\"o}m approximation method takes $\order(m^2 nP)$ time. The
random feature approximation takes $\order(m n P d_1)$ time, but can
be improved to $\order(m n P \log d_1)$ time using the fast Hadamard
transform~\cite{le2013fastfood}.  The complexity of computing a
gradient vector on a batch of $b$ images is $\order(m P d_2 b)$. The
complexity of projecting the parameter matrix onto the nuclear norm
ball is $\order(\min\{m^2 P d_2, m P^2 d_2^2\})$.  Thus, the
approximate algorithms provide substantial speed-ups on the projected
gradient descent steps. 


\subsection{Theoretical results}
\label{sec:theory}

In this section, we upper bound the generalization error of Algorithm~\ref{alg:two-layer-ccnn}, proving that it converges to the best possible generalization error of CNN. 
We focus on the binary classification case where the output dimension is $d_2 = 1$.\footnote{We can treat
the multiclass case by performing a standard one-versus-all reduction to the binary case.}

The learning of CCNN requires a kernel function $\kf$.
We consider kernel functions whose associated RKHS is large enough to contain any function taking the following form: $z \mapsto q(\langle w, z\rangle)$, where $q$ is an arbitrary
polynomial function and $w\in \R^{d_1}$ is an arbitrary vector. As a concrete example, we consider the inverse polynomial kernel:
\begin{align}\label{eqn:def-ipk}
\kf(z,z') & \defeq \frac{1}{2 - \inprod{z}{z'}},\qquad \ltwos{z}\leq 1, \ltwos{z'}\leq 1.
\end{align}
This kernel was 
studied by \citet{shalev2011learning} for learning
halfspaces, and by
\citet{zhang2015ell_1} for learning fully-connected neural networks.
We also consider the Gaussian RBF kernel:
\begin{align}\label{eqn:def-gaussian-kernel}
\kf(z,z') & \defeq \exp(- \gamma \ltwos{z-z'}^2),\qquad \ltwos{z}=\ltwos{z'}= 1, \gamma > 0.
\end{align}
As we show in Appendix~\ref{sec:ipk-and-gaussian-kernel}, the inverse polynomial kernel and the Gaussian kernel satisfy the above notion of richness. We focus on these two kernels for the theoretical analysis.

Let $\frcnn$ be the CCNN that minimizes the empirical
risk~\eqref{eqn:alg-obj-func} using one of the two kernels above.  Our
main theoretical result is that for suitably chosen activation
functions, the generalization error of $\frcnn$ is comparable to that
of the best CNN model.  In particular, the following theorem applies
to activation functions $\sigma$ of the following types:
\begin{enumerate}[leftmargin=*, label={(\alph*)}]
\item arbitrary polynomial functions (e.g., used
  by~\cite{chen2014fast,livni2014computational}).\label{label:poly-func}
\item sinusoid activation function $\sigma(t) \defeq \sin(t)$ (e.g.,
  used
  by~\cite{sopena1999neural,isa2010suitable}).\label{label:sine-func}
\item erf function $\erf(t) \defeq 2/\sqrt{\pi}\int_{0}^t e^{-z^2}
  dz$, which represents an approximation to the sigmoid function (See
  Figure~\ref{fig:compare-activation}(a)). \label{label:erf-func}
\item a smoothed hinge loss $\sigma_{\rm sh}(t) \defeq
  \int_{-\infty}^t \frac{1}{2}(\erf(z)+1) dz$, which represents an
  approximation to the ReLU function (See
  Figure~\ref{fig:compare-activation}(b)).\label{label:hinge-func}
\end{enumerate}

To understand why these activation functions pair with our choice of
kernels, we consider polynomial expansions of the above activation functions: $\sigma(t) = \sum_{j=0}^\infty a_j t^j$, and note that the smoothness of these functions are characterized by the rate of their coefficients $\{a_j\}_{j=0}^\infty$ converging to zero. If $\sigma$ is a polynomial in category~\ref{label:poly-func}, then the richness of the RKHS guarantees that it contains the class of filters activated by function $\sigma$. If $\sigma$ is a non-polynomial function in categories~\ref{label:sine-func},\ref{label:erf-func},\ref{label:hinge-func}, then as Appendix~\ref{sec:ipk-and-gaussian-kernel} shows, the RKHS contains the filter only if the coefficients $\{a_j\}_{j=0}^\infty$ converge quickly enough to zero (the criterion depends on the choice of the kernel). Concretely, the inverse polynomial kernel is shown to capture all of the four categories of activations: they are referred as \emph{valid activation functions} for the inverse
polynomial kernel. The Gaussian kernel induces a smaller RKHS, and is shown to capture categories~\ref{label:poly-func},\ref{label:sine-func}, so that these
functions are referred as \emph{valid activation functions} for the Gaussian kernel.
In contrast,
the sigmoid function and the ReLU function are not valid for either
kernel, because their polynomial expansions fail to converge quickly
enough, or more intuitively speaking, because they are not smooth
enough functions to be contained in the RKHS.

We are ready to state the main theoretical result. 
In the theorem statement, we use $K(X)\in \R^{P\times P}$ to denote
the random kernel matrix obtained from an input vector $X \in
\real^{d_0}$ drawn randomly from the population.  More precisely, the
$(p,q)$-th entry of $K(X)$ is given by $\kf(z_p(X), z_q(X))$.

\begin{theorem}
\label{theorem:single-layer-rcnn}
Assume that the loss function $\Loss(\cdot;y)$ is $L$-Lipchitz continuous for every $y\in[d_2]$ and that $\kf$ is the inverse polynomial kernel or the Gaussian kernel.
For any valid activation function $\sigma$, there is a constant
$C_\sigma(B_1)$ such that with the radius $R \defeq C_\sigma(B_1)B_2
r$, the expected generalization error is at most
\begin{align}
\E_{X,Y}[\Loss(\frcnn(X); Y)] \leq \inf_{f\in \cnn}\E_{X,Y}
  [\Loss(f(X); Y)] + \frac{c\; L C_\sigma(B_1)B_2 r \sqrt{\log(n P) \;
      \E_X[\ltwos{K(X)}]}}{\sqrt{n} },
\end{align}
where $c > 0$ is a universal constant.
\end{theorem}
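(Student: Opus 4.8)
The plan is to prove an approximation–estimation decomposition: the relaxed class $\rcnn$ contains the original class $\cnn$ with the stated radius (so there is no approximation error), and the gap between the population risk of the empirical minimizer $\frcnn$ and the oracle risk is controlled by the uniform deviation between empirical and population risk over $\rcnn$, which I bound by a Rademacher complexity argument. First I would establish the containment $\cnn \subseteq \rcnn$ with $R \defeq C_\sigma(B_1) B_2 r$. Working in the binary case $d_2 = 1$, fix any $f \in \cnn$ with filters $h_j(z) = \sigma(w_j^\top z)$, $\ltwos{w_j} \le B_1$, and coefficient vectors $\alpha_j$ with $\ltwos{\alpha_j} \le B_2$. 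Expanding $\sigma(t) = \sum_{\ell \ge 0} a_\ell t^\ell$ and invoking the richness of the RKHS (verified for the inverse polynomial and Gaussian kernels in Appendix~\ref{sec:ipk-and-gaussian-kernel}), each filter $h_j$ lies in the RKHS with Hilbert norm at most a constant $C_\sigma(B_1)$ depending on the decay of $\{a_\ell\}$ and on $B_1$; this is exactly where the notion of \emph{valid activation function} enters, since the defining series converges only when the coefficients decay fast enough relative to the kernel. Writing the filter in feature coordinates as $h_j(z) = \inprod{w_j}{Q^\dagger v(z)}$ with $\ltwos{w_j} \le C_\sigma(B_1)$, the parameter matrix is $A = \sum_{j=1}^r w_j \alpha_j^\top$, so by sub-additivity of the nuclear norm over rank-one terms $\nucnorm{A} \le \sum_{j=1}^r \ltwos{w_j}\,\ltwos{\alpha_j} \le r\, C_\sigma(B_1) B_2 = R$. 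Since the feature map reproduces each filter exactly on every input, $f^A$ and $f$ agree as functions, hence $\inf_{f^A \in \rcnn} \E_{X,Y}[\Loss(f^A(X);Y)] \le \inf_{f \in \cnn} \E_{X,Y}[\Loss(f(X);Y)]$.

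Next I would carry out the standard risk decomposition. Let $f^*$ nearly attain the oracle risk; by the containment above $f^* \in \rcnn$, so empirical optimality of $\frcnn$ gives $\hat{\mathcal{R}}(\frcnn) \le \hat{\mathcal{R}}(f^*)$, where $\mathcal{R}(f) \defeq \E_{X,Y}[\Loss(f(X);Y)]$ denotes population risk and $\hat{\mathcal{R}}(f)$ the empirical risk. I then bound the excess risk by $\mathcal{R}(\frcnn) - \mathcal{R}(f^*) \le [\mathcal{R}(\frcnn) - \hat{\mathcal{R}}(\frcnn)] + [\hat{\mathcal{R}}(f^*) - \mathcal{R}(f^*)]$, where the second bracket has zero mean. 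Taking expectations, the first bracket is at most $\E[\sup_{f \in \rcnn}(\mathcal{R}(f) - \hat{\mathcal{R}}(f))]$, which by symmetrization is bounded by $2L$ times the Rademacher complexity of $\rcnn$; here the $L$-Lipschitz loss is peeled off using the Ledoux–Talagrand contraction inequality.

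It remains to bound the Rademacher complexity of $\rcnn$. Since $f^A(x) = \tr(Z(x) A)$ is linear in $A$, duality between the nuclear and spectral norms yields
\begin{align*}
\E_\epsilon \sup_{\nucnorm{A}\le R} \frac{1}{n} \sum_{i=1}^n \epsilon_i \tr(Z(x_i)A) = \frac{R}{n}\, \E_\epsilon \, \ltwos{\textstyle\sum_{i=1}^n \epsilon_i Z(x_i)}.
\end{align*}
The crux is to control the expected spectral norm of this matrix Rademacher sum. Applying a non-commutative Khintchine / matrix Rademacher inequality gives $\E_\epsilon \ltwos{\sum_i \epsilon_i Z(x_i)} \lesssim \sqrt{\log(nP)}\,\ltwos{\sum_i Z(x_i) Z(x_i)^\top}^{1/2}$, where the logarithmic factor reflects the ambient dimension $m = \order(nP)$. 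The key identity is that $Z(x_i) Z(x_i)^\top = K(x_i)$, since its $(p,q)$ entry equals $\inprod{Q_{(i,p)}}{Q_{(i,q)}} = \kf(z_p(x_i), z_q(x_i))$; by sub-additivity of the spectral norm on positive semidefinite matrices and then taking expectations, $\E\,\ltwos{\sum_i K(x_i)} \le n\, \E_X[\ltwos{K(X)}]$. Combining these pieces gives a Rademacher complexity of order $R\sqrt{\log(nP)\,\E_X[\ltwos{K(X)}]/n}$; substituting $R = C_\sigma(B_1) B_2 r$ together with the factor $L$ reproduces the claimed bound, with all constants absorbed into the universal $c$.

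I expect the main obstacle to be the first step: certifying that each filter lies in the RKHS with a finite, explicitly controlled norm $C_\sigma(B_1)$. This requires a kernel-specific computation relating the polynomial expansion of $\sigma$ to the reproducing-kernel expansion of the inverse polynomial and Gaussian kernels, and is precisely what separates valid from invalid activations. The matrix concentration bound in the last step is the other delicate point: obtaining the clean dependence on $\E_X[\ltwos{K(X)}]$ (rather than a cruder uniform bound) together with only a single $\sqrt{\log(nP)}$ factor requires the non-commutative Khintchine inequality rather than a naive union bound over coordinates.
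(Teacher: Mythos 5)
Your overall skeleton (RKHS containment, empirical minimization, nuclear/spectral duality, the identity between the feature Gram matrix and $K(X)$, matrix concentration) matches the paper's architecture, but there is a genuine gap in how you identify the class the algorithm optimizes over with the class you analyze statistically. Your claim that ``the feature map reproduces each filter exactly on every input'' is false: the finite representation $z \mapsto \inprod{Q^\dagger v(z)}{\cdot}$ agrees with the RKHS filter $z \mapsto \sigma(\inprod{w_j}{z})$ only on training patches, because $Q$ is a factorization of the \emph{training} kernel matrix; on a fresh input it computes the value of the filter projected onto the span of training feature vectors. Consequently your containment argument does not establish that population risk over the matrix class dominates the oracle risk over $\cnn$, and, worse, the matrix class $\{x \mapsto \tr(Z(x)A) : \lncs{A} \leq R\}$ is data-dependent (through $Q$ and $v$), so the symmetrization step cannot legitimately be applied to it: Rademacher-based uniform convergence requires a function class fixed before the sample is drawn. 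The paper resolves exactly this with a two-class device. It defines the data-independent nonparametric class $\rcnn$ by the constraint $\sum_j \ltwos{\alpha_j} \hnorms{h_j} \leq R$ with genuine RKHS filters, proves $\cnn \subset \rcnn$ (Lemma~\ref{lemma:rcnn-contains-cnn}, resting on Lemmas~\ref{lemma:ipk-contains-filter} and~\ref{lemma:gaussian-contains-filter}), and then proves the transfer step you are missing (Lemma~\ref{lemma:erm}): the algorithm's output $\frcnn$, although computed over the finite matrix class, is an empirical risk minimizer within $\rcnn$, because each function $x \mapsto \tr(Z(x)A)$ has filters $z \mapsto \inprod{(Q^\top)^\dagger w_j}{v(z)}$ that are bona fide RKHS elements with Hilbert norm at most $\ltwos{w_j}$, and conversely any $f \in \rcnn$ can be replaced by its projection onto the span of training features without changing its empirical risk. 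Without this lemma both your approximation step and your uniform-convergence step have holes.

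A second, smaller issue concerns the concentration step: you invoke noncommutative Khintchine with ``ambient dimension $m = \order(nP)$,'' but for the data-independent class the relevant feature operator $\Psi(x)$ is infinite-dimensional, so a log factor tied to ambient dimension is vacuous. The paper instead truncates the Mercer feature map to its first $d$ coordinates, controls the tail using the uniform convergence of $\sum_{\ell} \varphi_\ell(z)\varphi_\ell(z')$ to $\kf(z,z')$, and applies Minsker's dimension-free matrix Bernstein inequality, in which the $\log(nP)$ factor arises from the trace bound $\tr\big(\Psi^{(d)}(x_i)(\Psi^{(d)}(x_i))^\top\big) \leq \tr(K(x_i)) \leq P$ --- an effective-rank rather than ambient-dimension quantity. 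Your identity $\Psi(X)\Psi(X)^\top = K(X)$ and the bound $\E\big[\ltwos{\sum_i K(x_i)}\big] \leq n\,\E_X[\ltwos{K(X)}]$ are correct and coincide with the paper's; once Lemma~\ref{lemma:erm} and the truncation device are supplied, the rest of your argument goes through essentially as in the paper.
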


\paragraph{Proof sketch}

The proof of Theorem~\ref{theorem:single-layer-rcnn} consists of two parts: First, we consider a larger function class that contains the class of CNNs. This function class is defined as:
\begin{align*}
\rcnn \defeq \Big\{ x\mapsto \sum_{j=1}^{r^*} \sum_{p=1}^{P} \alpha_{j,p}
h_j(z_p(x)) :\; r^* < \infty \mbox{ and } \sum_{j=1}^{r^*}
\ltwos{\alpha_j} \norms{h_j}_{\Hset} \leq C_\sigma(B_1) B_2 d_2
\Big\}.
\end{align*}
where $\hnorms{\cdot}$ is the norm of the RKHS associated with the kernel. This new function class relaxes the class of CNNs in two ways: 1) the filters are relaxed to belong to the RKHS, and 2) the $\ell_2$-norm bounds on the weight vectors are replaced by a single constraint on $\ltwos{\alpha_j}$ and $\hnorms{h_j}$.
We prove the following property for the predictor $\frcnn$: it must be an empirical risk minimizer of $\rcnn$, even though the algorithm has never explicitly optimized the loss within this nonparametric function class.

Second, we characterize the Rademacher complexity of this new function class $\rcnn$, proving an upper bound for it based on the matrix concentration theory. 
Combining this bound with the classical Rademacher complexity theory~\cite{bartlett2003rademacher}, we conclude
that the generalization loss of $\frcnn$ converges to the least possible generalization error of $\rcnn$. The later loss is bounded by the generalization loss of CNNs (because $\cnn\subseteq \rcnn$), which establishes the theorem. See
Appendix~\ref{sec:proof-main-theorem} for the full proof of Theorem~\ref{theorem:single-layer-rcnn}. 

\begin{figure}
\centering
\begin{tabular}{ccc}
\includegraphics[width = 0.4\textwidth]{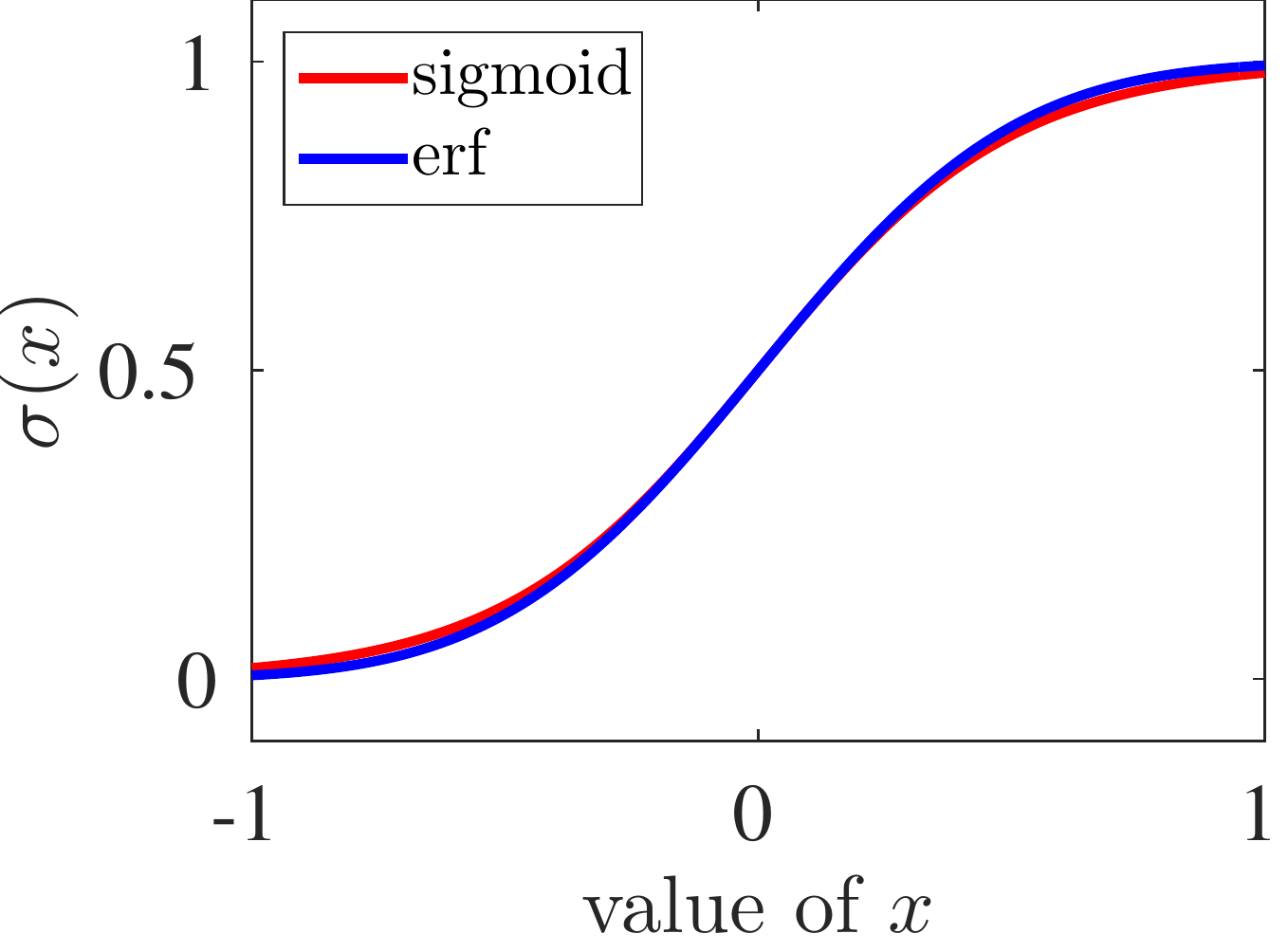} & &
\includegraphics[width = 0.4\textwidth]{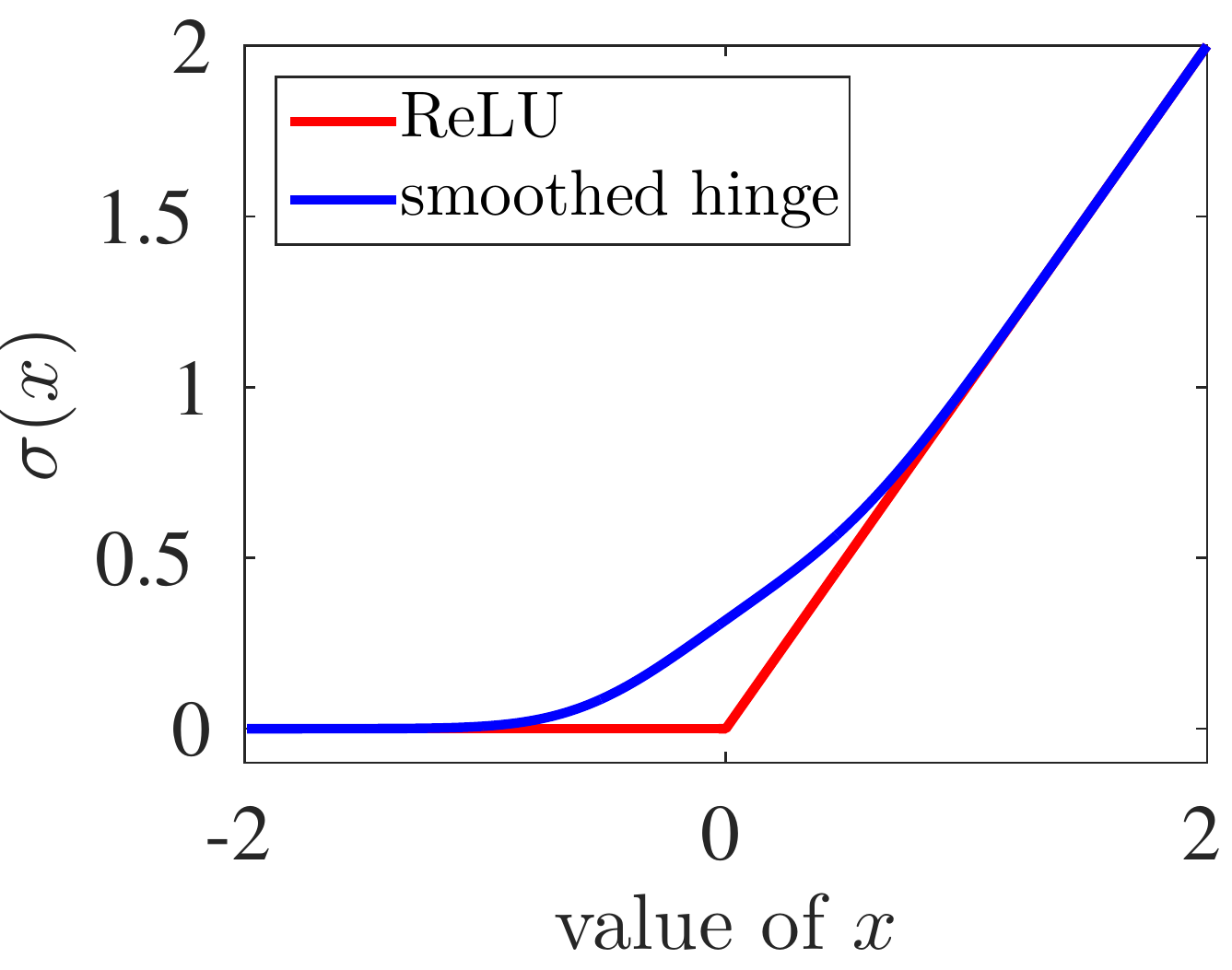}\\ (a) sigmoid
v.s. erf && (b) ReLU v.s. smoothed hinge loss
\end{tabular}
\caption{Comparing different activation functions. The two functions
  in (a) are quite similar. The smooth hinge loss in (b) is a smoothed
  version of ReLU.}\label{fig:compare-activation}
\end{figure}


\myparagraph{Remark on activation functions}

It is worth noting that the quantity $C_\sigma(B_1)$ depends on the
activation function $\sigma$, and more precisely, depends on the
convergence rate of the polynomial expansion of $\sigma$.
Appendix~\ref{sec:ipk-and-gaussian-kernel} shows that if $\sigma$ is a
polynomial function of degree $\ell$, then $C_\sigma(B_1) =
\order(B_1^\ell)$.  If $\sigma$ is the sinusoid function, the erf
function or the smoothed hinge loss, then the quantity $C_\sigma(B_1)$
will be exponential in~$B_1$. In an algorithmic perspective, we don't
need to know the activation function for executing
Algorithm~\ref{alg:two-layer-ccnn}. In a theoretical perspective,
however, the choice of $\sigma$ is relevant from the point of
Theorem~\ref{theorem:single-layer-rcnn} to compare $\frcnn$ with the
best CNN, whose representation power is characterized by the choice of
$\sigma$.  Therefore, if a CNN with a low-degree polynomial $\sigma$
performs well on a given task, then CCNN also enjoys correspondingly
strong generalization.  Empirically, this is actually borne
out: in Section~\ref{sec:experiment}, we show that the quadratic
activation function performs almost as well as the ReLU function for
digit classification. 

\myparagraph{Remark on parameter sharing}

In order to demonstrate the importance of parameter sharing, consider
a CNN without parameter sharing, so that we have filter weights
$w_{j,p}$ for each filter index $j$ and patch index $p$.  With this
change, the new CNN output~\eqref{eqn:def-cnn} is
\begin{align}
f(x) = \sum_{j=1}^{r} \sum_{p=1}^P \alpha_{j,p} \sigma(w_{j,p}^\top
z_p(x)),\quad \mbox{where $\alpha_{j,p}\in \R$ and $w_{j,p}\in
  \R^{d_1}$}.
\end{align}
Note that the hidden layer of this new network has $P$ times more
parameters than that of the convolutional neural network with parameter
sharing.  These networks without parameter sharing can be learned by the recursive kernel method
proposed by~\citet{zhang2015ell_1}. This paper shows that under the norm constraints $\ltwos{w_j} \leq \bou'_1$ and
$\sum_{j=1}^{r} \sum_{p=1}^P |\alpha_{j,p}| \leq \bou'_2$, the excess
risk of the recursive kernel method is at most $\order(L
C_\sigma(B'_1)B'_2\sqrt{ K_{\max} / n})$, where $K_{\max} = \max_{z:
  \ltwos{z}\leq 1} \kf(z,z)$ is the maximal value of the kernel
function.  Plugging in the norm constraints of the function class
$\cnn$, we have $B'_1 = B_1$ and $B'_2 = B_2 r \sqrt{P}$.  Thus, the
expected risk of the estimated $\fhat$ is bounded by:
\begin{align}
\E_{X,Y}[\Loss(\fhat(X); Y)] \leq \inf_{f\in \cnn}\E_{X,Y}[\Loss(f(X);
  Y)] + \frac{c\; L C_\sigma(B_1) B_2 r \sqrt{P K_{\max}}}{\sqrt{n}}.
\end{align}
Comparing this bound to Theorem~\ref{theorem:single-layer-rcnn}, we
see that (apart from the logarithmic terms) they differ in the
multiplicative factors of $\sqrt{P \, K_{\max}}$ versus
$\sqrt{\E[\ltwos{K(X)}]}$.  Since the matrix $K(X)$ is
\mbox{$P$-dimensional,} we have
\begin{align*}
\ltwos{K(X)} & \leq \max_{p \in [P]} \sum_{q \in [P]} |\kf(z_p(X),
z_q(X))| \leq P \, K_{\max}.
\end{align*}
This demonstrates that $\sqrt{P \, K_{\max}}$ is always greater than $\sqrt{\E[\ltwos{K(X)}]}$.
In general, the first term can be up to factor of $\sqrt{P}$ times
greater, which implies that the sample complexity of the recursive
kernel method is up to $P$ times greater than that of the CCNN. This
difference corresponds to the fact that the recursive kernel method
learns a model with $P$ times more parameters. Although comparing the upper bounds doesn't 
rigorously show that one method is better than the other, it gives the right intuition for understanding the importance of parameter sharing.

\section{Learning multi-layer CCNNs}
\label{sec:multi}

In this section, we describe a heuristic method for learning CNNs with more
layers. The idea is to estimate the parameters of the convolutional layers incrementally
from bottom to the top. Before presenting the multi-layer algorithm,
we present two extensions, average pooling and multi-channel inputs.

\myparagraph{Average pooling}

Average pooling is a technique to reduce the output dimension of the
convolutional layer from dimensions $P \times r$ to dimensions $P'
\times r$ with $P' < P$. Suppose that the filter $h_j$ applied to all
the patch vectors produces the output vector $H_j(x) \defeq
(h_j(z_1(x)), \cdots, h_j(z_P(x))) \in \R^{P \times r}$. Average
pooling produces a $P' \times r$ matrix, where each row is the average of
the rows corresponding to a small subset of the $P$ patches.  For example, we might average every
pair of adjacent patches, which would produce $P' = P/2$ rows.  The
operation of average pooling can be represented via
left-multiplication using a fixed matrix $G \in \R^{P' \times P}$.

For the CCNN model, if we apply average pooling after the
convolutional layer, then the $\outindex$-th output of the CCNN model
becomes $\tr(G Z(x) A_\outindex)$ where $A_\outindex \in \R^{m\times
  P'}$ is the new (smaller) parameter matrix.  Thus, performing a
pooling operation requires only replacing every matrix $Z(x_i)$ in
problem~\eqref{eqn:alg-obj-func} by the pooled matrix $G Z(x_i)$.  Note
that the linearity of the CCNN allows us to effectively pool before convolution,
even though for the CNN, pooling must be done after applying the nonlinear filters.
The resulting ERM problem is still convex, and the number of parameters
have been reduced by $P/P'$-fold.  Although average pooling is
straightforward to incorporate in our framework, unfortunately, max
pooling does not fit into our framework due to its nonlinearity.


\myparagraph{Processing multi-channel inputs}

If our input has $C$ channels (corresponding to RGB colors, for example), then the input
becomes a matrix $x \in \R^{C\times d_0}$. The $c$-th row of matrix
$x$, denoted by $x[c] \in \R^{d_0}$, is a vector representing the $c$-th
channel. We define the multi-channel patch vector as a concatenation
of patch vectors for each channel:
\begin{align*}
z_p(x) \defeq (z_p(x[1]), \dots, z_p(x[C])) \in \R^{C d_1}.
\end{align*}
Then we construct the feature matrix $Z(x)$ using the concatenated patch
vectors $\{z_p(x)\}_{p=1}^P$. From here, everything else of
Algorithm~\ref{alg:two-layer-ccnn} remains the same. We note that this
approach learns a convex relaxation of filters taking the form
$\sigma(\sum_{c=1}^C \langle w_c, z_p(x[c])\rangle)$, parametrized by
the vectors $\{w_c\}_{c=1}^C$.


\myparagraph{Multi-layer CCNN}

\begin{algorithm}[t]
\begin{flushleft}
{\bf Input:}{Data $\{(x_i,y_i)\}_{i=1}^n$, kernel function $\kf$,
  number of layers $m$, regularization parameters $R_1,\dots,R_m$,
  number of filters $r_1,\dots,r_m$.} \\
  Define $H_1(x) = x$. \\
  For each layer $s = 2, \dots, m$:
\begin{itemize}
  \item Train a two-layer network by Algorithm~\ref{alg:two-layer-ccnn},
  taking $\{(H_{s-1}(x_i),y_i)\}_{i=1}^n$ as training examples and $R_s, r_s$ as parameters.
    Let $H_s$ be the output of the convolutional layer and $\fhat_s$ be the predictor.
\end{itemize}

{\bf Output:} Predictor $\fhat_m$ and the top convolutional layer output $H_m$.
\end{flushleft}
\caption{Learning multi-layer CCNNs}\label{alg:multi-layer-ccnn}
\end{algorithm}

Given these extensions, we are ready to present the algorithm for
learning multi-layer CCNNs. The algorithm is summarized in
Algorithm~\ref{alg:multi-layer-ccnn}.
For each layer $s$, we call Algorithm~\ref{alg:two-layer-ccnn}
using the output of previous convolutional layers as input---note that
this consists of $r$ channels (one from each previous filter)
and thus we must use the multi-channel extension.
Algorithm~\ref{alg:multi-layer-ccnn}
outputs a new convolutional layer along with a prediction function,
which is kept only at the last layer.
We optionally use averaging pooling after each successive layer.
to reduce the output dimension of the convolutional layers.

\begin{figure}[t]
\centering
\begin{tabular}{ccc}
\includegraphics[width = 0.45\textwidth]{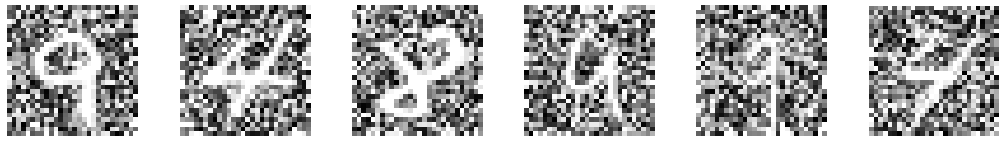}
&&
\includegraphics[width = 0.45\textwidth]{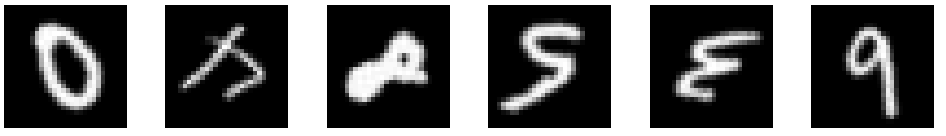}\\
(a) {\tt rand} && (b) {\tt rot}\\
&&\\
\includegraphics[width = 0.45\textwidth]{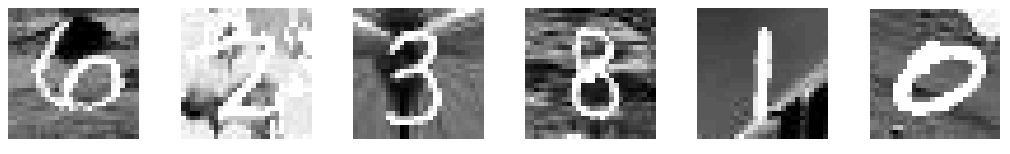}&&
\includegraphics[width = 0.45\textwidth]{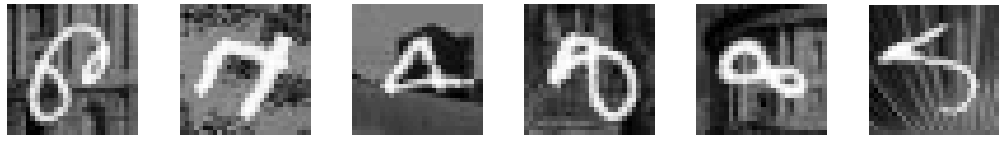}\\
(c) {\tt img} && (d) {\tt img+rot}
\end{tabular}
\caption{Some variations of the MNIST dataset: (a) random background
  inserted into the digit image; (b) digits rotated by a random angle
  generated uniformly between $0$ and $2\pi$; (c) black and white
  image used as the background for the digit image; (d) combination of
  background perturbation and rotation perturbation.}\label{fig:mnist}
\end{figure}

\section{Experiments}
\label{sec:experiment}

In this section, we compare the CCNN approach with other methods. The
results are reported on the MNIST dataset and its variations for digit
recognition, and on the CIFAR-10 dataset for object classification.


\subsection{MNIST and variations}

Since the basic MNIST digits are relatively easy to classify, we also
consider more challenging variations~\cite{WinNT}.  These variations
are known to be hard for methods without a convolution mechanism (for
instance, see the paper~\cite{vincent2010stacked}).
Figure~\ref{fig:mnist} shows a number of sample images from these
different datasets. All the images are of size $28\times 28$. For all
datasets, we use 10,000 images for training, 2,000 images for
validation and 50,000 images for testing.  This 10k/2k/50k
partitioning is standard for MNIST variations~\cite{WinNT}.

\myparagraph{Implementation details}

For the CCNN method and the baseline CNN method, we train two-layer
and three-layer models respectively. The models with $k$ convolutional
layers are denoted by CCNN-$k$ and CNN-$k$. Each convolutional layer is
constructed on $5 \times 5$ patches with unit stride, followed by
$2\times 2$ average pooling. The first and the second convolutional
layers contains 16 and 32 filters, respectively.  The loss function is
chosen as the $10$-class logistic loss. We use the Gaussian kernel
$\kf(z,z') = \exp(-\gamma \ltwos{z-z'}^2)$ and set hyperparameters
$\gamma = 0.2$ for the first convolutional layer and $\gamma = 2$ for the second. The feature
matrix $Z(x)$ is constructed via random feature
approximation~\cite{rahimi2007random} with dimension $m=500$ for the first convolutional layer
and $m=1000$ for the second.
Before training each CCNN layer, we
preprocess the input vectors $z_p(x_i)$ using local contrast
normalization and ZCA whitening~\cite{coates2010analysis}. The convex
optimization problem is solved by projected SGD with mini-batches of size $50$.

As a baseline approach, the CNN models are activated by the ReLU
function $\sigma(t) = \max\{0,t\}$ or the quadratic function
$\sigma(t) = t^2$.  We train them using mini-batch SGD. The input
images are preprocessed by global contrast normalization and ZCA
whitening~\cite[see, e.g.][]{srivastava2014dropout}.  We compare our
method against several alternative baselines. The CCNN-1 model is
compared against an SVM with the Gaussian RBF kernel (SVM$_{rbf}$) and
a fully connected neural network with one hidden layer (NN-1). The
CCNN-2 model is compared against methods that report the
state-of-the-art results on these datasets, including the
translation-invariant RBM model (TIRBM)~\cite{sohn2012learning}, the
stacked denoising auto-encoder with three hidden layers
(SDAE-3)~\cite{vincent2010stacked}, the ScatNet-2
model~\cite{bruna2013invariant} and the PCANet-2
model~\cite{chan2015pcanet}.


\myparagraph{Results}

\begin{table}[t]\small
\begin{tabular}{|l|c|c|c|c|c|}\hline
 & {\tt basic} & {\tt rand} & {\tt rot} & {\tt img} & {\tt
    img+rot}\\\hline SVM$_{ rbf}$~\cite{vincent2010stacked} & 3.03\% &
  14.58\%&{\bf 11.11\%}& 22.61\%&
  55.18\%\\ NN-1~\cite{vincent2010stacked}&4.69\% &20.04\% &18.11\%&
  27.41\%& 62.16\%\\ CNN-1 (ReLU)& 3.37\% &9.83\% &18.84\%& 14.23\%&
  45.96\% \\ CCNN-1& {\bf 2.38\%} &{\bf 7.45\%} &13.39\%& {\bf
    10.40\%}& {\bf 42.28\%}\\ \hline TIRBM~\cite{sohn2012learning} & -
  & - & {\bf 4.20\%}&-& 35.50\%\\ SDAE-3~\cite{vincent2010stacked}
  &2.84\%& 10.30\% &9.53\% &16.68\%
  &43.76\%\\ ScatNet-2~\cite{bruna2013invariant} &1.27\%& 12.30\%&
  7.48\%& 18.40\%& 50.48\%\\ PCANet-2~\cite{chan2015pcanet} & {\bf
    1.06\%} &6.19\% &7.37\% &10.95\% &35.48\%\\ CNN-2 (ReLU) & 2.11\%&
  5.64\%& 8.27\%& 10.17\%& 32.43\%\\ CNN-2 (Quad) & 1.75\%& 5.30\%&
  8.83\%& 11.60\%& 36.90\%\\ CCNN-2 & 1.38\%& {\bf 4.32\%}& 6.98\% &
  {\bf 7.46\%}& {\bf 30.23\%}\\ \hline
\end{tabular}
\caption{Classification error on the basic MNIST and its four
    variations. The best performance within each block is bolded. The
    tag ``ReLU'' and ``Quad'' means ReLU activation and quadratic
    activation, respectively.}
\label{table:mnist-error} 
\end{table}

\begin{table}[t]
\small
\begin{tabular}{|l|c|}\hline
 & Classification error\\\hline
Full model CCNN-2 & 30.23\%\\\hline
Linear kernel & 38.47\%\\\hline
No ZCA whitening & 35.86\%\\\hline
Fewer random features & 31.95\%\\\hline
$\lfs{A} \leq R$ constraint & 30.66\%\\\hline
Early stopping & 30.63\% \\\hline
\end{tabular}
\caption{Factors that affect the training of CCNN-2: changing the kernel function, removing the data whitening or decreasing the number of random features has non-negligible impact to the performance. The results are reported on the {\tt img+rot} dataset.
  }
\label{table:CCNN-2-variants} 
\end{table}

Table~\ref{table:mnist-error} shows the classification errors on the
test set. The models are grouped with respect to the number of layers
that they contain. For models with one convolutional layer, the errors
of CNN-1 are significantly lower than that of NN-1, highlighting the
benefits of parameter sharing. The CCNN-1 model outperforms CNN-1 on
all datasets. For models with two or more hidden layers, the CCNN-2
model outperforms CNN-2 on all datasets, and is competitive against
the state-of-the-art.  In particular, it achieves the best accuracy on
the {\tt rand}, {\tt img} and {\tt img+rot} dataset, and is comparable
to the state-of-the-art on the remaining two datasets.

In order to understand the key factors that affect the training of
CCNN filters, we evaluate five variants: 
\begin{enumerate}
    \setlength{\itemsep}{0pt}
    \setlength{\parskip}{0pt}
    \setlength{\parsep}{0pt}
\item[(1)] replace the Gaussian kernel by a linear kernel; 
\item[(2)] remove the ZCA whitening in preprocessing; 
\item[(3)] use fewer random features ($m=200$ rather than $m=500$) to approximate the
  kernel matrix;
\item[(4)] regularize the parameter matrix by the Frobenius norm
  instead of the nuclear norm;
\item[(5)] stop the mini-batch SGD early before it converges.
\end{enumerate}
We evaluate the obtained filters by training a
second convolutional layer on top of them, then evaluating the
classification error on the hardest dataset {\tt img+rot}. As
Table~\ref{table:CCNN-2-variants} shows, switching to the linear
kernel or removing the ZCA whitening significantly degenerates the
performance. This is because that both variants equivalently modify
the kernel function. Decreasing the number of random features also
has a non-negligible effect, as it makes the kernel approximation
less accurate. These observations highlight the impact of the kernel function. Interestingly, replacing the nuclear norm by a Frobenius norm or stopping the algorithm early doesn't hurt the performance. To see their impact on the parameter matrix, we compute the effective rank (ratio between the nuclear norm and the spectral norm, see~\cite{eldar2012compressed}) of matrix $\Ahat$. The effective rank obtained by the last two variants are equal to 77 and 24, greater than that of the original CCNN (equal to 12). It reveals that the last two variants have damaged the algorithm's capability of enforcing a low-rank solution. However, the CCNN filters are retrieved from the top-$r$ singular vectors of the parameter matrix, hence the performance will remain stable as long as the top singular vectors are robust to the variation of the matrix.

In Section~\ref{sec:theory}, we showed that if the activation function
is a polynomial function, then the CCNN requires lower sample complexity to
match the performance of the best possible CNN. More precisely, if the activation function is degree-$\ell$ polynomial, then $C_\sigma(B)$ in the upper bound will be controlled by $\order(B^\ell)$.
This motivates us to study the performance of low-degree polynomial activations.
Table~\ref{table:mnist-error} shows that the CNN-2 model with a
quadratic activation function achieves error rates comparable to that
with a ReLU activation: CNN-2 (Quad) outperforms CNN-2 (ReLU) on the
{\tt basic} and {\tt rand} datasets, and is only slightly worse on the
{\tt rot} and {\tt img} dataset. Since the performance of CCNN matches that of the best possible CNN, the good performance of the quadratic activation in part explains why the CCNN is also good. 


\subsection{CIFAR-10}

In order to test the capability of CCNN in complex classification
tasks, we report its performance on the CIFAR-10
dataset~\cite{krizhevsky2009learning}.  The dataset consists of 60000
images divided into 10 classes. Each image has $32\times 32$ pixels in
RGB colors. We use 50k images for training and 10k images for testing.

\myparagraph{Implementation details} We train CNN and CCNN models with
two, three, and four layers
Each convolutional layer is
constructed on $5\times 5$ patches with unit stride, followed by
$3\times 3$ average pooling with two-pixel stride. We train 32, 32, 64
filters for the three convolutional layers from bottom to the top.  For
any $s\times s$ input, zero pixels are padded on its borders so that
the input size becomes $(s+4)\times (s+4)$, and the output size of the
convolutional layer is $(s/2)\times (s/2)$. The CNNs are activated by
the ReLU function. For CCNNs, we use the Gaussian kernel with hyperparameter
$\gamma = 1,2,2$ (for the three convolutional layers). The
feature matrix $Z(x)$ is constructed via random feature approximation
with dimension $m=2000$. The preprocessing steps are the same as in
the MNIST experiments. It was known that the generalization
performance of the CNN can be improved by training on random crops of the
original image~\cite{krizhevsky2012imagenet}, so we train the CNN on random
$24\times 24$ patches of the image, and test on the central $24\times
24$ patch. We also apply random cropping to training the the first and the second layer of the CCNN.

We compare the CCNN against other baseline methods that don't involve
nonconvex optimization: the kernel SVM with Fastfood Fourier features
(SVM$_{\rm Fastfood}$)~\cite{le2013fastfood}, the PCANet-2
model~\cite{chan2015pcanet} and the convolutional kernel networks
(CKN)~\cite{mairal2014convolutional}.

\begin{figure}
\begin{floatrow}
\hspace{-10pt} 
\capbtabbox{\small\qquad\qquad
  \begin{tabular}{|l|c|}\hline
 & Error rate \\\hline
CNN-1 & 34.14\%\\
CCNN-1 & {\bf 23.62\%}\\\hline
CNN-2 &  24.98\%\\
CCNN-2 & {\bf 20.52\%}\\\hline
SVM$_{\rm Fastfood}$~\cite{le2013fastfood} & 36.90\%\\
PCANet-2~\cite{chan2015pcanet} & 22.86\%\\
CKN~\cite{mairal2014convolutional} & 21.70\%\\
CNN-3 & 21.48\%\\
CCNN-3 & {\bf 19.56\%}\\
\hline
\end{tabular}\qquad\vspace{20pt}}
{\caption{Classification error on the CIFAR-10 dataset. The best
    performance within each block is bolded.}
\label{table:cifar10-error}} 
\ffigbox[0.45\textwidth]{\includegraphics[height =
    0.32\textwidth]{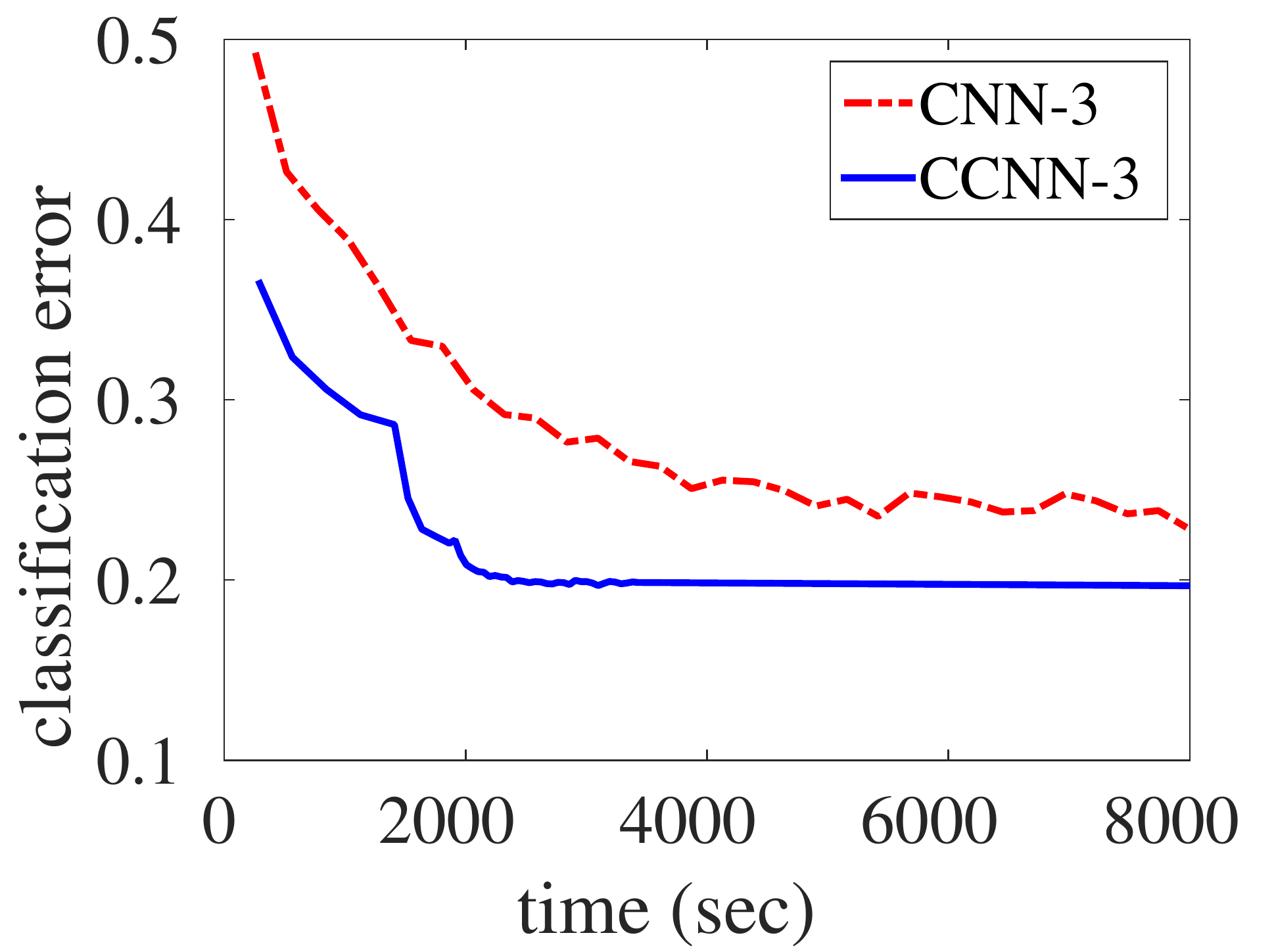}} {\caption{The convergence of
    CNN-3 and CCNN-3 on the CIFAR-10 dataset.}
  \label{fig:runtime-cifar10}}
\end{floatrow}
\end{figure}


\myparagraph{Results} 

We report the classification errors on in
Table~\ref{table:cifar10-error}.  For models of all depths, the CCNN
model outperforms the CNN model. The CCNN-2 and the CCNN-3 model also
outperform the three baseline methods. The advantage of the CCNN is
substantial for learning two-layer networks, when the optimality
guarantee of CCNN holds. The performance improves as more layers are
stacked, but as we observe in Table~\ref{table:cifar10-error}, the
marginal gain of CCNN diminishes as the network grows deeper. We suspect
that this is due to the greedy fashion in which the CCNN layers are constructed.
Once trained, the
low-level filters of the CCNN are no longer able to adapt to the final
classifier. In contrast, the low-level filters of the CNN model are continuously adjusted
via backpropagation.

It is worth noting that the performance of the CNN can be further improved
by adding more layers, switching from average pooling to max pooling,
and being regularized by local response normalization and
dropout~(see, e.g.~\cite{krizhevsky2012imagenet}).  The figures in
Table~\ref{table:cifar10-error} are by no means the state-of-the-art
result on CIFAR-10. However, it does demonstrate that the convex
relaxation is capable of improving the performance of convolutional
neural networks.  For future work, we propose to study a better way
for convexifying deep CNNs.

In Figure~\ref{fig:runtime-cifar10}, we compare the computational
efficiency of CNN-3 to its convexified version CCNN-3.  Both models
are trained by mini-batch SGD (with batchsize equal to 50) on a single
processor. We optimized the choice of step-size for each algorithm.
From the plot, it is easy to identify the three stages of the CCNN-3
curve for training the three convolutional layers from bottom to the
top.  We also observe that the CCNN converges faster than the CNN.  More
precisely, the CCNN takes half the runtime of the CNN to reach an error rate
of 28\%, and one-fifth of the runtime to reach an error rate of 23\%.
The per-iteration cost for training the first layer of CCNN is about
109\% of the per-iteration cost of CNN, but the per-iteration cost for
training the remaining two layers are about 18\% and 7\% of that of
CNN. Thus, training CCNN scales well to large datasets.

\begin{table}
\small
\begin{tabular}{|l|r|r|r|}
\hline
& CNN-1& CNN-2 & CNN-3\\\hline
Original & 34.14\% & 24.98\% & 21.48\% \\\hline
Convexified & {\bf 23.62\% }& {\bf 21.88\%}& {\bf 18.18\%}\\\hline
\end{tabular}
\caption{Comparing the original CNN and the one whose top convolution
  layer is convexified by CCNN. The classification errors are reported
  on CIFAR-10.}
\label{table:convexify-top-layer}
\end{table}


\myparagraph{Training a CCNN on top of a CNN} 

Instead of training a CCNN from scratch, we can also train CCNNs on top
of existing CNN layers. More concretely, once a CNN-$k$ model is
obtained, we train a two-layer CCNN by taking the $(k-1)$-th hidden
layer of CNN as input. This approach preserves the low-level features
learned by CNN, only convexifying its top convolutional layer.  The
underlying motivation is that the traditional CNN is good at learning
low-level features through backpropagation, while the CCNN is optimal
in learning two-layer networks.

In this experiment, we convexify the top convolutional layer of CNN-2
and CNN-3 using the CCNN approach, with a smaller Gaussian kernel
parameter (i.e.~$\gamma = 0.1$) and keeping other hyperparameters the
same as in the training of CCNN-2 and CCNN-3. The results are shown in
Table~\ref{table:convexify-top-layer}. The convexified CNN achieves
better accuracy on all network depths. It is worth noting that the
time for training a convexified layer is only a small fraction of the
time for training the original CNN.


\section{Related work}
\label{sec:related-work}

With the empirical success of deep neural networks, there has been an
increasing interest in theoretical understanding.
\citet{bengio2005convex} showed how to formulate neural network
training as a convex optimization problem involving an infinite number
of parameters. This perspective encourages incrementally adding
neurons to the network, whose generalization error was studied by
\citet{bach2014breaking}. \citet{zhang2015learning} propose a
polynomial-time ensemble method for learning fully-connected neural
networks, but their approach handles neither parameter sharing nor the
convolutional setting.  Other relevant works for learning
fully-connected networks
include~\cite{sedghi2014provable,janzamin2015generalization,
  livni2014computational}. Aslan et
al.~\cite{aslan2013convex,aslan2014convex} propose a method for
learning multi-layer latent variable models. They showed that for
certain activation functions, the proposed method is a convex
relaxation for learning the fully-connected neural network.

Another line of work is devoted to understanding the energy landscape of
a neural network. Under certain assumptions on the data distribution,
it can be shown that any local minimum of a two-layer fully connected
neural network has an objective value that is close to the global
minimum value~\cite{dauphin2014identifying,choromanska2014loss}. If
this property holds, then gradient descent can find a solution that is
``good enough''.  Similar results have also been established for
over-specified neural networks~\cite{safran2015quality}, or neural
networks that has a certain parallel
topology~\cite{haeffele2015global}.  However, these results are not
applicable to a CNN, since the underlying assumptions are not
satisfied by CNNs.

Past work has studied learning translation invariant features without
backpropagation. \citet{mairal2014convolutional} present convolutional kernel networks. They propose a translation-invariant kernel whose feature mapping can be approximated
by a composition of the convolution, non-linearity and pooling
operators, obtained through unsupervised learning.  However, this
method is not equipped with the optimality guarantees that we have
provided for CCNNs in this paper, even for learning one convolution
layer.  The ScatNet method~\cite{bruna2013invariant} uses translation
and deformation-invariant filters constructed by wavelet analysis;
however, these filters are independent of the data, unlike the
analysis in this paper. \citet{daniely2016toward} show that a randomly
initialized CNN can extract features as powerful as kernel methods,
but it is not clear how to provably improve the model from a random
initialization.


\section{Conclusion}
\label{sec:conclusion}

In this paper, we have shown how convex optimization can be used to
efficiently optimize CNNs as well as understand them
statistically. Our convex relaxation consists of two parts: the
nuclear norm relaxation for handling parameter sharing, and the RKHS
relaxation for handling non-linearity. For the two-layer CCNN, we
proved that its generalization error converges to that of the best
possible two-layer CNN. We handled multi-layer CCNNs only
heuristically, but observed that adding more layers improves the
performance in practice. On real data experiments, we demonstrated
that CCNN outperforms the traditional CNN of the same depth, is
computationally efficient, and can be combined with the traditional
CNN to achieve better performance. A major open problem is to formally
study the convex relaxation of deep CNNs.

\subsection*{Acknowledgements}

This work was partially supported by Office of Naval Research MURI
grant DOD-002888, Air Force Office of Scientific Research Grant
AFOSR-FA9550-14-1-001, Office of Naval Research grant ONR-N00014,
National Science Foundation Grant CIF-31712-23800,
as well as a Microsoft Faculty Research Award to the second author.

\appendix


\section{Inverse polynomial kernel and Gaussian kernel}
\label{sec:ipk-and-gaussian-kernel}

In this appendix, we describe the properties of the two types of kernels --- the inverse polynomial kernel~\eqref{eqn:def-ipk} and the Gaussian RBF kernel~\eqref{eqn:def-gaussian-kernel}. 
We prove that the associated reproducing kernel Hilbert Spaces (RKHS) of these kernels contain filters taking the form $h: z \mapsto \sigma(\langle w, z\rangle)$ for particular activation functions $\sigma$. 

\subsection{Inverse polynomial kernel}

We first verify that the function~\eqref{eqn:def-ipk} is a kernel function. This holds since that we can find a mapping $\varphi: \R^{d_1} \to \ell^2(\N)$ such that $\kf(z,z') = \langle \varphi(z),\varphi(z') \rangle$. We use $z_i$ to represent the $i$-th coordinate of an infinite-dimensional vector $z$. 
The $(k_1,\dots,k_j)$-th coordinate of $\varphi(z)$, where $j\in \N$ and $k_1,\dots,k_j\in [d_1]$, is defined as $2^{-\frac{j+1}{2}}x_{k_1}\dots x_{k_j}$. By this definition, we have
\begin{align}\label{eqn:psi-product}
	\langle \varphi(x), \varphi(y) \rangle &= \sum_{j=0}^\infty 2^{-(j+1)} \sum_{(k_1,\dots,k_j)\in [d_1]^j} z_{k_1}\dots z_{k_j} z'_{k_1}\dots z'_{k_j}.
\end{align}
Since $\ltwos{z}\leq 1$ and $\ltwos{z'} \leq 1$, the series on the right-hand side is absolutely convergent. 
The inner term on the right-hand side of equation~\eqref{eqn:psi-product} can be simplified to
\begin{align}\label{eqn:simplify-sum-product-terms}
\sum_{(k_1,\dots,k_j)\in [d_1]^j} z_{k_1}\dots z_{k_j} z'_{k_1}\dots z'_{k_j} = (\langle z, z' \rangle)^j.
\end{align}
Combining equations~\eqref{eqn:psi-product} and~\eqref{eqn:simplify-sum-product-terms} and using the fact that $|\langle z, z' \rangle| \leq 1$, we have
\begin{align*}
\langle \varphi(z), \varphi(z') \rangle &= \sum_{j=0}^\infty 2^{-(j+1)} (\langle z, z' \rangle)^j \stackrel{\rm (i)}{=} \frac{1}{2 - \langle z, z' \rangle} = \kf(z,z'),
\end{align*}
which verifies that $\kf$ is a kernel function and $\varphi$ is the associated feature map. 
Next, we prove that the associated RKHS contains the class of nonlinear filters. The lemma was proved by \citet{zhang2015ell_1}. We include the proof to make the paper self-contained.

\begin{lemma}\label{lemma:ipk-contains-filter}
Assume that the function $\sigma(x)$ has a polynomial expansion $\sigma(t) = \sum_{j=0}^\infty a_j t^j$. Let $C_\sigma(\lambda) \defeq \sqrt{\sum_{j=0}^\infty 2^{j+1}a_j^2\lambda^{2j}}$. If $C_\sigma(\ltwos{w}) < \infty$, then the RKHS induced by the inverse polynomial kernel contains function $h: z \mapsto \sigma(\inprod{w}{z})$ with Hilbert norm $\hnorms{h} = C_\sigma(\ltwos{w})$.
\end{lemma}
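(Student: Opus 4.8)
The plan is to exhibit an explicit element $\theta\in\ell^2(\N)$ whose associated RKHS function $z\mapsto\inprod{\theta}{\varphi(z)}$ equals $h$, and then argue that $\theta$ is the minimal-norm representative so that $\hnorms{h}=\ltwos{\theta}$. Recall that the RKHS induced by $\kf$ consists of all functions $z\mapsto\inprod{\theta}{\varphi(z)}$ with $\theta$ ranging over $\ell^2(\N)$, and that for such a function the Hilbert norm equals the norm of the projection of $\theta$ onto $\overline{\mathrm{span}}\{\varphi(z):\ltwos{z}\le 1\}$; in particular $\hnorms{h}=\ltwos{\theta}$ whenever $\theta$ itself lies in this closed span.

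Guided by the coordinates of $\varphi$, I would define $\theta$ by setting its $(k_1,\dots,k_j)$-th coordinate (for $j\in\N$ and $k_1,\dots,k_j\in[d_1]$) equal to $a_j\,2^{(j+1)/2}\,w_{k_1}\cdots w_{k_j}$. First I would compute $\ltwos{\theta}^2$: summing the squared coordinates level by level and using $\sum_{(k_1,\dots,k_j)\in[d_1]^j} w_{k_1}^2\cdots w_{k_j}^2=(\ltwos{w}^2)^j$ gives $\ltwos{\theta}^2=\sum_{j=0}^\infty 2^{j+1}a_j^2\ltwos{w}^{2j}=C_\sigma(\ltwos{w})^2$, which is finite by hypothesis, so $\theta\in\ell^2(\N)$. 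Next, pairing $\theta$ against $\varphi(z)$ the factors $2^{(j+1)/2}$ and $2^{-(j+1)/2}$ cancel, and regrouping the sum by degree together with the identity $\sum_{(k_1,\dots,k_j)} w_{k_1}\cdots w_{k_j} z_{k_1}\cdots z_{k_j}=\inprod{w}{z}^j$ (the same computation as in~\eqref{eqn:simplify-sum-product-terms}) yields $\inprod{\theta}{\varphi(z)}=\sum_{j=0}^\infty a_j\inprod{w}{z}^j=\sigma(\inprod{w}{z})=h(z)$. This already shows $h$ belongs to the RKHS with $\hnorms{h}\le\ltwos{\theta}=C_\sigma(\ltwos{w})$.

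The remaining, and main, step is to upgrade this bound to an equality, i.e., to show that $\theta$ is exactly the minimal-norm representative, equivalently that $\theta\in\overline{\mathrm{span}}\{\varphi(z)\}$. For this I would restrict attention to the one-parameter family $\varphi(tw)$ for scalars $t$ with $|t|\le 1/\ltwos{w}$: its $(k_1,\dots,k_j)$-th coordinate is $t^j$ times the corresponding coordinate of the level-$j$ block $e_j$ (the vector agreeing with $2^{-(j+1)/2}w_{k_1}\cdots w_{k_j}$ on level $j$ and vanishing elsewhere), so that $\varphi(tw)=\sum_{j=0}^\infty t^j e_j$ as an $\ell^2$-valued power series with radius of convergence $\sqrt2/\ltwos{w}>0$. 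Within this radius the map $t\mapsto\varphi(tw)$ is analytic, so each block $e_j=\tfrac{1}{j!}\tfrac{d^j}{dt^j}\varphi(tw)\big|_{t=0}$ is a limit of finite differences of the feature vectors $\varphi(tw)$ and hence lies in $\overline{\mathrm{span}}\{\varphi(z)\}$. Since $\theta=\sum_{j=0}^\infty a_j 2^{j+1} e_j$ is the $\ell^2$-limit of partial sums of these blocks, it too lies in the closed span, which gives $\hnorms{h}=\ltwos{\theta}=C_\sigma(\ltwos{w})$ and completes the proof.

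I expect the genuine difficulty to be concentrated in this last step: the representation and the norm upper bound are essentially bookkeeping once $C_\sigma(\ltwos{w})<\infty$ guarantees $\theta\in\ell^2(\N)$, whereas proving the matching lower bound requires verifying that $\theta$ carries no component orthogonal to the feature vectors. If the analytic/finite-difference extraction of the blocks $e_j$ proves delicate to justify directly in $\ell^2$, an equivalent and perhaps cleaner route is to take any $\psi\in\ell^2(\N)$ with $\inprod{\psi}{\varphi(z)}\equiv 0$ on $\ltwos{z}\le 1$, specialize to $\inprod{\psi}{\varphi(tw)}=\sum_{j}t^j\inprod{\psi}{e_j}\equiv0$, conclude $\inprod{\psi}{e_j}=0$ for every $j$ by uniqueness of power-series coefficients, and thereby deduce $\inprod{\psi}{\theta}=0$; this shows $\theta$ is orthogonal to the complement of the closed span, which is exactly the required minimality.
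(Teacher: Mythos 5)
Your proposal is correct, and its core is the same as the paper's proof: you construct exactly the paper's witness $\wdiamond$ (your $\theta$, with $(k_1,\dots,k_j)$-th coordinate $2^{(j+1)/2}a_j w_{k_1}\cdots w_{k_j}$), verify $\inprod{\theta}{\varphi(z)}=\sigma(\inprod{w}{z})$ by the same degree-by-degree regrouping as in~\eqref{eqn:simplify-sum-product-terms}, and compute $\ltwos{\theta}=C_\sigma(\ltwos{w})$ identically. Where you genuinely go beyond the paper is the final equality $\hnorms{h}=\ltwos{\theta}$: the paper disposes of this in one line (``by the basic property of the RKHS, the Hilbert norm of $h$ is equal to the $\ell_2$-norm of $\wdiamond$''), which strictly speaking only yields $\hnorms{h}\leq\ltwos{\theta}$ unless one also checks that $\theta$ is the minimal-norm representative, i.e.\ lies in $\overline{\mathrm{span}}\{\varphi(z):\ltwos{z}\leq 1\}$. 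Your power-series argument along $t\mapsto\varphi(tw)$---extracting the level-$j$ blocks $e_j$ inside the radius $\sqrt{2}/\ltwos{w}$, or equivalently the orthogonality route testing against any $\psi$ annihilating all feature vectors and invoking uniqueness of power-series coefficients---supplies precisely this missing verification, and both variants you sketch are sound (the $\ell^2$-convergence of $\sum_j a_j 2^{j+1}e_j$ to $\theta$ follows from $C_\sigma(\ltwos{w})<\infty$, so continuity of the inner product closes the argument). In short: your proof is a strictly more careful version of the paper's; the paper's suffices for its downstream use, since Lemma~\ref{lemma:rcnn-contains-cnn} and the risk bounds only need the upper bound $\hnorms{h}\leq C_\sigma(B_1)$, but your completion is what the stated equality $\hnorms{h}=C_\sigma(\ltwos{w})$ actually requires.
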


\begin{proof}
Let $\varphi$ be the feature map that we have defined for the polynomial inverse kernel. We define vector $\wdiamond\in \ell^2(\N)$ as follow: the $(k_1,\dots,k_j)$-th coordinate of $\wdiamond$, where $j\in \N$ and $k_1,\dots,k_j\in [d_1]$, is equal to $2^{\frac{j+1}{2}} a_j w_{k_1}\dots w_{k_j}$. By this definition, we have
\begin{align}\label{eqn:linearize-sigma}
 \sigma(\langle w, z \rangle)= \sum_{t=0}^\infty a_j (\langle w, z \rangle)^j =\sum_{j=0}^\infty a_j \sum_{(k_1,\dots,k_j)\in [d_1]^j} w_{k_1}\dots w_{k_j} z_{k_1}\dots z_{k_j} = 	\langle \wdiamond, \varphi(z) \rangle,
\end{align}
where the first equation holds since $\sigma(x)$ has a polynomial expansion $\sigma(x) = \sum_{j=0}^\infty a_j x^j$, the second by expanding the inner product, and the third by definition of $\wdiamond$ and $\varphi(z)$. The $\ell_2$-norm of $\wdiamond$ is equal to:
\begin{align}
\ltwos{\wdiamond}^2 &= \sum_{j=0}^\infty 2^{j+1} a_j^2 \sum_{(k_1,\dots,k_j)\in [d_1]^j}  \wdiamond_{k_1}^2 \wdiamond_{k_2}^2 \cdots \wdiamond_{k_j}^2= \sum_{j=0}^\infty 2^{j+1} a_j^2 \ltwos{\wdiamond}^{2j} = C_\sigma^2(\ltwos{\wdiamond}) < \infty.\label{eqn:gip-norm-induction}
\end{align}
By the basic property of the RKHS, the Hilbert norm of $h$ is equal to the $\ell_2$-norm of $\wdiamond$.
Combining equations~\eqref{eqn:linearize-sigma} and~\eqref{eqn:gip-norm-induction}, we conclude that $h\in \Hset$ and
$\hnorms{h} = \ltwos{\wdiamond} = C_\sigma(\ltwos{\wdiamond})$.
\end{proof}

According to Lemma~\ref{lemma:ipk-contains-filter}, it suffices to upper bound $C_\sigma(\lambda)$ for a particular activation function $\sigma$. To make $C_\sigma(\lambda) < \infty$, the coefficients $\{a_j\}_{j=0}^\infty$ must quickly converge to zero, meaning that the activation function must be sufficiently smooth.
For polynomial functions of degree $\ell$, the definition of $C_\sigma$ implies that $C_\sigma(\lambda) = \order(\lambda^\ell)$. For the sinusoid activation $\sigma(t) \defeq \sin(t)$, we have
\[
 C_\sigma(\lambda) = \sqrt{\sum_{j=0}^\infty \frac{2^{2j+2}}{((2j+1)!)^2}\cdot (\lambda^2)^{2j+1}} \leq 2 e^{\lambda^2}.
\]
For the erf function and the smoothed hinge loss function defined in Section~\ref{sec:theory}, \citet[][Proposition 1]{zhang2015ell_1} proved that $C_\sigma(\lambda) = \order(e^{c\lambda^2})$ for universal numerical constant $c > 0$.

\subsection{Gaussian kernel}

The Gaussian kernel also induces an RKHS that contains a particular class of nonlinear filters. The proof is similar to that of Lemma~\ref{lemma:ipk-contains-filter}.

\begin{lemma}\label{lemma:gaussian-contains-filter}
Assume that the function $\sigma(x)$ has a polynomial expansion $\sigma(t) = \sum_{j=0}^\infty a_j t^j$. Let $C_\sigma(\lambda) \defeq \sqrt{\sum_{j=0}^\infty \frac{j!e^{2\gamma}}{(2\gamma)^j} a_j^2\lambda^{2j}}$. If $C_\sigma(\ltwos{w}) < \infty$, then the RKHS induced by the Gaussian kernel contains the function $h: z \mapsto \sigma(\inprod{w}{z})$ with Hilbert norm $\hnorms{h} = C_\sigma(\ltwos{w})$.
\end{lemma}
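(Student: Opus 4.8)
The plan is to follow exactly the template established in the proof of Lemma~\ref{lemma:ipk-contains-filter}: construct an explicit feature map $\varphi : \R^{d_1} \to \ell^2(\N)$ for the Gaussian kernel, exhibit a vector $\wdiamond \in \ell^2(\N)$ whose inner product against $\varphi(z)$ reproduces $\sigma(\inprod{w}{z})$, and then read off the Hilbert norm of $h$ as $\ltwos{\wdiamond}$. The only genuinely new ingredient is the feature map itself, since the remainder of the argument is formally identical to the inverse polynomial case once the power-series coefficients of the kernel are identified.

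First I would exploit the unit-norm constraint $\ltwos{z} = \ltwos{z'} = 1$ from the kernel definition~\eqref{eqn:def-gaussian-kernel} to rewrite the exponent. On the unit sphere we have $\ltwos{z - z'}^2 = 2 - 2\inprod{z}{z'}$, so the Gaussian kernel factors as
\[
\kf(z,z') = e^{-2\gamma}\, \exp\!\big(2\gamma \inprod{z}{z'}\big) = e^{-2\gamma} \sum_{j=0}^\infty \frac{(2\gamma)^j}{j!} \big(\inprod{z}{z'}\big)^j,
\]
where the exponential series converges absolutely because $|\inprod{z}{z'}| \le 1$ by Cauchy--Schwarz. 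This has precisely the same structure as the inverse polynomial kernel, only with the scalar coefficient $2^{-(j+1)}$ replaced by $(2\gamma)^j e^{-2\gamma}/j!$. Reusing the multinomial identity~\eqref{eqn:simplify-sum-product-terms}, I would then define the $(k_1,\dots,k_j)$-th coordinate of $\varphi(z)$ to be $\sqrt{(2\gamma)^j e^{-2\gamma}/j!}\; z_{k_1}\cdots z_{k_j}$, which gives $\inprod{\varphi(z)}{\varphi(z')} = \kf(z,z')$ as required and confirms $\varphi$ is a valid feature map.

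With $\varphi$ in hand, I would define $\wdiamond$ by matching coefficients degree by degree: its $(k_1,\dots,k_j)$-th entry equals $a_j \sqrt{j!\, e^{2\gamma}/(2\gamma)^j}\; w_{k_1}\cdots w_{k_j}$. Expanding $\sigma(\inprod{w}{z}) = \sum_{j} a_j (\inprod{w}{z})^j$ and applying~\eqref{eqn:simplify-sum-product-terms} once more, the two normalizing factors cancel (their product is $\sqrt{e^{2\gamma} e^{-2\gamma}} = 1$), yielding the linearization $\sigma(\inprod{w}{z}) = \inprod{\wdiamond}{\varphi(z)}$. Computing the squared $\ell_2$-norm of $\wdiamond$ then gives
\[
\ltwos{\wdiamond}^2 = \sum_{j=0}^\infty \frac{j!\, e^{2\gamma}}{(2\gamma)^j}\, a_j^2 \ltwos{w}^{2j} = C_\sigma^2(\ltwos{w}),
\]
and the standard RKHS fact that the Hilbert norm of $h$ equals $\ltwos{\wdiamond}$ completes the argument.

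I do not expect any serious obstacle, as the proof is a near-verbatim adaptation of the inverse polynomial case. The one point requiring care is the reduction of $\kf$ to an exponential of $\inprod{z}{z'}$, which hinges essentially on the unit-norm normalization built into~\eqref{eqn:def-gaussian-kernel}; without it the kernel would not be a pure power series in $\inprod{z}{z'}$ and this clean feature map would be unavailable. The remaining subtlety is purely bookkeeping: carrying the correct constant $\sqrt{(2\gamma)^j e^{-2\gamma}/j!}$ through the feature map so that the cancellation in the linearization is exact and the final norm matches the stated definition of $C_\sigma(\lambda)$.
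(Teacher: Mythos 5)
Your proof is correct and follows essentially the same route as the paper: the identical feature map with coordinates $e^{-\gamma}\sqrt{(2\gamma)^j/j!}\,z_{k_1}\cdots z_{k_j}$, the matching vector $\wdiamond$, and the same coefficient-cancellation and norm computation yielding $\ltwos{\wdiamond} = C_\sigma(\ltwos{w})$. The only difference is cosmetic: you derive the feature map from scratch via $\ltwos{z-z'}^2 = 2 - 2\inprod{z}{z'}$ on the unit sphere, whereas the paper simply cites this feature map as well known.
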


\begin{proof}
When $\ltwos{z} = \ltwos{z'} = 1$, It is well-known~\cite[see, e.g.][]{SteChr08} the following mapping $\varphi: \R^{d_1} \to \ell^2(\N)$ is a feature map for the Gaussian RBF kernel: the $(k_1,\dots,k_j)$-th coordinate of $\varphi(z)$, where $j\in \N$ and $k_1,\dots,k_j\in [d_1]$, is defined as $e^{-\gamma}((2\gamma)^j/j!)^{1/2} x_{k_1}\dots x_{k_j}$.
Similar to equation~\eqref{eqn:linearize-sigma}, we define a vector $\wdiamond\in \ell^2(\N)$ as follow: the $(k_1,\dots,k_j)$-th coordinate of $\wdiamond$, where $j\in \N$ and $k_1,\dots,k_j\in [d_1]$, is equal to $e^\gamma ((2\gamma)^j/j!)^{-1/2} a_j w_{k_1}\dots w_{k_j}$. By this definition, we have
\begin{align}\label{eqn:linearize-sigma-for-rbf}
 \sigma(\langle w, z \rangle)= \sum_{t=0}^\infty a_j (\langle w, z \rangle)^j =\sum_{j=0}^\infty a_j \sum_{(k_1,\dots,k_j)\in [d_1]^j} w_{k_1}\dots w_{k_j} z_{k_1}\dots z_{k_j} = 	\langle \wdiamond, \varphi(z) \rangle.
\end{align}
The $\ell_2$-norm of $\wdiamond$ is equal to:
\begin{align}
\ltwos{\wdiamond}^2 &= \sum_{j=0}^\infty \frac{j!e^{2\gamma}}{(2\gamma)^j} a_j^2 \sum_{(k_1,\dots,k_j)\in [d_1]^j}  \wdiamond_{k_1}^2 \wdiamond_{k_2}^2 \cdots \wdiamond_{k_j}^2= \sum_{j=0}^\infty \frac{j!e^{2\gamma}}{(2\gamma)^j} a_j^2 \ltwos{\wdiamond}^{2j} = C_\sigma^2(\ltwos{\wdiamond}) < \infty.\label{eqn:gip-norm-induction-for-rbf}
\end{align}
Combining equations~\eqref{eqn:linearize-sigma} and~\eqref{eqn:gip-norm-induction}, we conclude that $h\in \Hset$ and $\hnorms{h} = \ltwos{\wdiamond} = C_\sigma(\ltwos{\wdiamond})$.
\end{proof}

Comparing Lemma~\ref{lemma:ipk-contains-filter} and Lemma~\ref{lemma:gaussian-contains-filter}, we find that the Gaussian kernel imposes a stronger condition on the smoothness of the activation function.
 For polynomial functions of degree $\ell$, we still have $C_\sigma(\lambda) = \order(\lambda^\ell)$. For the sinusoid activation $\sigma(t) \defeq \sin(t)$, it can be verified that
\[
 C_\sigma(\lambda) = \sqrt{e^{2\gamma} \sum_{j=0}^\infty \frac{1}{(2j+1)!}\cdot \Big(\frac{\lambda^2}{2\gamma}\Big)^{2j+1}} \leq e^{\lambda^2/(4\gamma)+\gamma}.
\]
However, the value of $C_\sigma(\lambda)$ is infinite when $\sigma$ is the erf function or the smoothed hinge loss, meaning that the Gaussian kernel's RKHS doesn't contain filters activated by these two functions.

\section{Convex relaxation for nonlinear activation}
\label{sec:nonlinear-activation}

In this appendix, we provide a detailed derivation of the relaxation
for nonlinear activation functions that we previously sketched in
Section~\ref{sec:nonlinear}. Recall that the filter output is $\sigma(
\inprod{w_j}{z})$. Appendix~\ref{sec:ipk-and-gaussian-kernel} shows that given a sufficiently smooth activation function
$\sigma$,  we can find some kernel function
$\kf: \R^{d_1}\times \R^{d_1}\to \R$ and a feature map $\varphi:\R^{d_1}\to \ell^2(\N)$ satisfying $\kf(z,z') \equiv \inprod{\varphi(z)}{\varphi(z')}$, such that
\begin{align}
\label{eqn:nonlinear-to-linear}
\sigma(\inprod{w_j}{z}) \equiv \inprod{\wdiamond_j}{\varphi(z)}.
\end{align}
Here $\wdiamond_j \in \ell^2(\N)$ is a countable-dimensional vector and $\varphi \defeq (\varphi_1,\varphi_2,\dots)$ is a countable sequence of functions. 
Moreover, the $\ell_2$-norm of $\wdiamond_j$ is bounded as $\ltwos{\wdiamond_j}\leq
C_\sigma(\ltwos{w_j})$ for a monotonically increasing
function~$C_\sigma$ that depends on the kernel (see Lemma~\ref{lemma:ipk-contains-filter} and Lemma~\ref{lemma:gaussian-contains-filter}). As a
consequence, we may use $\varphi(z)$ as the vectorized representation
of the patch $z$, and use $\wdiamond_j$ as the linear transformation
weights, then the problem is reduced to training a CNN with the identity activation function. 

The filter is parametrized by an infinite-dimensional vector $\wdiamond_j$.
Our next step is to reduce the original ERM problem to a finite-dimensional one. In order to minimize the empirical risk, one only needs to concern the output  on the training data, that is, the output of $\inprod{\wdiamond_j}{\varphi(z_p(x_i))}$ for all $(i,p)\in [n]\times[P]$. Let $T$ be the orthogonal projector onto the linear subspace spanned by the vectors $\{\varphi(z_p(x_i)): (i,p)\in [n]\times [P]\}$. Then we have
\[
	\forall~ (i,p)\in [n]\times [P]:\quad \inprod{\wdiamond_j}{\varphi(z_p(x_i))} = \inprod{\wdiamond_j}{T \varphi(z_p(x_i))} = \inprod{T \wdiamond_j}{\varphi(z_p(x_i))}.
\]
The last equation follows since the orthogonal projector $T$ is self-adjoint. Thus, for empirical risk minimization, we can without loss of generality assume that $\wdiamond_j$ belongs to the linear subspace spanned by $\{\varphi(z_p(x_i)): (i,p)\in [n]\times [P]\}$ and 
reparametrize it by:
\begin{align}
\label{eqn:representer-equation}
\wdiamond_j = \sum_{(i,p)\in [n]\times [P]} \beta_{j,(i,p)}
\varphi(z_p(x_i)).
\end{align}
Let $\beta_j \in \R^{n P}$ be a vector whose whose $(i,p)$-th
coordinate is $\beta_{j,(i,p)}$. In order to estimate $\wdiamond_j$,
it suffices to estimate the vector $\beta_j$. By definition, the
vector satisfies the relation $\beta_j^\top K \beta_j =
\ltwos{\wdiamond_j}^2$, where $K$ is the $nP \times nP$ kernel matrix
defined in Section~\ref{sec:nonlinear}. As a consequence, if we can
find a matrix $Q$ such that $Q Q^\top = K$, then we have the norm
constraint
\begin{align}\label{eqn:evaluate-hilbert-norm}
\ltwos{Q^\top \beta_j} = \sqrt{\beta_j^\top K \beta_j} =
\ltwos{\wdiamond_j} \leq C_\sigma(\ltwos{w_j}) \leq C_\sigma(B).
\end{align}
Let $v(z) \in \real^{n P}$ be a vector whose $(i,p)$-th coordinate is
equal to $\kf(z, z_p(x_i))$. Then by
equations~\eqref{eqn:nonlinear-to-linear}
and~\eqref{eqn:representer-equation}, the filter output can be written
as
\begin{align}\label{eqn:output-can-be-written-as}
\sigma \big( \inprod{w_j}{z} \big) & \equiv
\inprod{\wdiamond_j}{\varphi(z)} \, \equiv \, \inprod{\beta_j}{v(z)}.
\end{align}
For any patch $z_p(x_i)$ in the training data, the vector $v(z_p(x_i))$ belongs to the column space of
the kernel matrix $K$. Therefore, letting $Q^\dagger$ represent the
pseudo-inverse of \mbox{matrix $Q$,} we have
\begin{align*}
\forall~ (i,p)\in [n]\times [P]:\quad \inprod{ \beta_j}{v(z_p(x_i))} = \beta_j^\top Q Q^\dagger v(z_p(x_i)) = \inprod{(Q^\top)^\dagger Q^\top\beta_j}{v(z_p(x_i))}.
\end{align*}
It means that if we replace the vector $\beta_j$ on the right-hand side of equation~\eqref{eqn:output-can-be-written-as} by the vector $(Q^\top)^\dagger Q^\top \beta_j$, then it won't change the empirical risk. 
Thus, for ERM we can parametrize the filters by
\begin{align}\label{eqn:final-representation-of-filters}
h_j(z) \defeq
\inprod{(Q^\top)^\dagger Q^\top\beta_j}{v(z)} = \inprod{Q^\dagger v(z)}{Q^\top\beta_j}.
\end{align}

Let $Z(x)$ be an $P \times nP$ matrix whose $p$-th row is equal to
$Q^\dagger v(z_p(x))$. Similar to the steps in
equation~\eqref{eqn:simple-model-output}, we have
\begin{align*}
f_\outindex(x) = \sum_{j=1}^r \alpha_{\outindex,j}^\top Z(x)
K^{1/2}\beta_j = \tr\Big(Z(x) \Big(\sum_{j=1}^r
K^{1/2}\beta_j\alpha_{\outindex,j}^\top \Big)\Big) =
\tr(Z(x)A_\outindex),
\end{align*}
where $A_\outindex \defeq \sum_{j=1}^r
Q^\top\beta_j\alpha_{\outindex,j}^\top$.  If we let $A \defeq
(A_1,\dots,A_{d_2})$ denote the concatenation of these matrices, then
this larger matrix satisfies the constraints:
\begin{description}
\item[Constraint (C1):] $\max \limits_{j \in [r]} \ltwos{Q^\top
  \beta_j} \leq C_\sigma(\bou_1)$ and $\max \limits_{(\outindex, j)
  \in [d_2] \times [r]} \ltwos{\alpha_{\outindex,j}} \leq \bou_2$.
\item[Constraint (C2):] The matrix $A$ has rank at most $r$.
\end{description}
We relax these two constraints to the nuclear norm constraint:
\begin{align}
\label{eqn:trace-constraint}
\lncs{A} \leq C_\sigma(B_1) B_2 r \sqrt{d_2}.
\end{align}
By comparing constraints~\eqref{eqn:simple-model-nuclear-constraint}
and~\eqref{eqn:trace-constraint}, we see that the only difference is
that the term $B_1$ in the norm bound has been replaced by
$C_\sigma(B_1)$.  This change is needed because we have used the
kernel trick to handle nonlinear activation functions.


\section{Proof of Theorem~\ref{theorem:single-layer-rcnn}}
\label{sec:proof-main-theorem}

Since the output is one-dimensional in this case, we can adopt the
simplified notation $(A,\alpha_{j,p})$ for the matrix $(A_1,
\alpha_{1,j,p})$.  Letting $\Hset$ be the RKHS associated with the
kernel function $\kf$, and letting
$\norms{\cdot}_{\Hset}$ be the associated Hilbert norm, consider the
function class
\begin{align}
\label{eqn:rcnn-class}
\rcnn \defeq \Big\{ x\mapsto \sum_{j=1}^{r^*} \sum_{p=1}^{P} \alpha_{j,p}
h_j(z_p(x)) :\; r^* < \infty \mbox{ and } \sum_{j=1}^{r^*}
\ltwos{\alpha_j} \norms{h_j}_{\Hset} \leq C_\sigma(B_1) B_2 d_2
\Big\}.
\end{align}
Here $\alpha_{j,p}$ denotes the $p$-th entry of vector $\alpha_j\in
\R^P$, whereas the quantity $C_\sigma(B_1)$ only depends on $B_1$ and
the activation function~$\sigma$. The following lemma shows that the
function class $\rcnn$ is rich enough so that it contains family of
CNN predictors as a subset.  The reader should recall the notion of a
\emph{valid activation function}, as defined prior to the statement of
Theorem~\ref{theorem:single-layer-rcnn}.
\begin{lemma}
\label{lemma:rcnn-contains-cnn}
For any valid activation function $\sigma$, there is a quantity
$C_\sigma(B_1)$, depending only on $B_1$ and $\sigma$, such that $\cnn
\subset \rcnn$.
\end{lemma}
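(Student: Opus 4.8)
The plan is to take an arbitrary $f \in \cnn(B_1,B_2)$ and exhibit it directly as a member of $\rcnn$. In the binary case $d_2 = 1$, such an $f$ has the form $f(x) = \sum_{j=1}^{r}\sum_{p=1}^{P}\alpha_{j,p}\,h_j(z_p(x))$ with ridge filters $h_j(z) = \sigma(\inprod{w_j}{z})$, whose parameters satisfy $\max_j \ltwos{w_j}\le B_1$ and $\max_j \ltwos{\alpha_j}\le B_2$. The key observation is that $\cnn$ and $\rcnn$ use exactly the same functional form $\sum_j\sum_p \alpha_{j,p} h_j(z_p(x))$; they differ only in (i) which filters $h_j$ are admissible --- the specific ridge functions $\sigma(\inprod{w_j}{\cdot})$ for $\cnn$ versus arbitrary elements of the RKHS $\Hset$ for $\rcnn$ --- and (ii) the budget imposed on the coefficients. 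Thus the entire task reduces to two checks: each CNN filter $h_j$ must lie in $\Hset$, and the weighted sum $\sum_{j=1}^{r}\ltwos{\alpha_j}\,\hnorms{h_j}$ must fit inside the $\rcnn$ budget. Having verified these, I take $r^* = r$ and keep the same $\{\alpha_{j,p}\}$ and $\{h_j\}$, so that $f$ is literally the same function viewed inside $\rcnn$.

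For the membership-and-norm step I would invoke Lemma~\ref{lemma:ipk-contains-filter} when $\kf$ is the inverse polynomial kernel and Lemma~\ref{lemma:gaussian-contains-filter} when $\kf$ is the Gaussian kernel. Writing $\sigma(t) = \sum_{j\ge 0} a_j t^j$, each lemma asserts that whenever the series defining $C_\sigma(\ltwos{w})$ converges, the ridge function $z\mapsto \sigma(\inprod{w}{z})$ belongs to $\Hset$ with Hilbert norm exactly $\hnorms{h} = C_\sigma(\ltwos{w})$. This is precisely where the hypothesis of a \emph{valid activation function} enters: validity is defined so that $C_\sigma(\lambda) < \infty$ on the relevant range for the kernel at hand (all $\lambda$ in the polynomial and sinusoid cases, and additionally the erf and smoothed-hinge cases for the inverse polynomial kernel). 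Consequently every filter $h_j$ lies in $\Hset$ with $\hnorms{h_j} = C_\sigma(\ltwos{w_j})$.

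It then remains to bound the coefficient budget. Since $\lambda \mapsto C_\sigma(\lambda)$ is monotonically increasing and $\ltwos{w_j}\le B_1$, we obtain $\hnorms{h_j} = C_\sigma(\ltwos{w_j}) \le C_\sigma(B_1)$ for each $j$; combining with $\ltwos{\alpha_j}\le B_2$ gives
\[
\sum_{j=1}^{r}\ltwos{\alpha_j}\,\hnorms{h_j} \;\le\; \sum_{j=1}^{r} B_2\,C_\sigma(B_1) \;=\; r\,B_2\,C_\sigma(B_1),
\]
which coincides with the radius $R = C_\sigma(B_1)B_2 r$ used in Theorem~\ref{theorem:single-layer-rcnn} and hence lies within the budget defining $\rcnn$. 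Since $f \in \cnn(B_1,B_2)$ was arbitrary, this establishes $\cnn \subset \rcnn$.

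The only substantive ingredient is the RKHS-membership-with-exact-norm claim for the ridge filters, and this is exactly the content of the two preceding lemmas, so the argument is essentially bookkeeping once they are in hand. The one point requiring care is matching the notion of validity to the chosen kernel: the erf and smoothed-hinge activations are valid for the inverse polynomial kernel but not for the Gaussian kernel, so the proof must condition on which kernel is in use and appeal to finiteness of the appropriate $C_\sigma$.
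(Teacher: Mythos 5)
Your proof is correct and takes essentially the same route as the paper's: both invoke Lemma~\ref{lemma:ipk-contains-filter} (inverse polynomial kernel) and Lemma~\ref{lemma:gaussian-contains-filter} (Gaussian kernel) to place each ridge filter in $\Hset$ with $\hnorms{h_j} = C_\sigma(\ltwos{w_j}) \leq C_\sigma(B_1)$, and then verify the coefficient budget $\sum_{j=1}^{r}\ltwos{\alpha_j}\hnorms{h_j} \leq C_\sigma(B_1)B_2 r$. You simply spell out the step the paper calls ``straightforward to verify,'' and your reading of the budget as $C_\sigma(B_1)B_2 r$ is the consistent one (the $d_2$ appearing in the class definition~\eqref{eqn:rcnn-class} is evidently a typo, since the paper's own Lemma~\ref{lemma:erm} and Theorem~\ref{theorem:single-layer-rcnn} use $R = C_\sigma(B_1)B_2 r$).
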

\noindent See Appendix~\ref{AppLemContain} for the proof. \\

Next, we connect the function class $\rcnn$ to the CCNN algorithm.
Recall that $\frcnn$ is the predictor trained by the CCNN
algorithm. The following lemma shows that $\frcnn$ is an empirical
risk minimizer within $\rcnn$.

\begin{lemma}
\label{lemma:erm}
With the CCNN hyper-parameter $R = C_\sigma(B_1) B_2 d_2$, the
predictor $\frcnn$ is guaranteed to satisfy the inclusion
\begin{align*}
\frcnn \in \arg\min_{f\in \rcnn} \sum_{i=1}^n \Loss(f(x_i);y_i).
\end{align*}
\end{lemma}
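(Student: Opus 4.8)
The plan is to show that the two constrained empirical risk minimization problems---the one solved by Algorithm~\ref{alg:two-layer-ccnn} over matrices with $\lncs{A}\le R$, and the abstract one over the function class $\rcnn$---coincide once we restrict attention to the training data, so that their minimizers agree. The crucial observation is that the empirical risk $\sum_i\Loss(f(x_i);y_i)$ depends on $f$ only through its values $\{f(x_i)\}$, hence only through the filter values $\{h_j(z_p(x_i))\}$ on the training patches. The single algebraic tool I would rely on is the variational (atomic) characterization of the nuclear norm, namely $\lncs{A}=\min\{\sum_j\ltwos{u_j}\ltwos{\alpha_j}:A=\sum_j u_j\alpha_j^\top\}$, with the minimum attained by the singular value decomposition. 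I work with $d_2=1$ throughout, so $R=C_\sigma(B_1)B_2$ and $A$ is an $m\times P$ matrix.

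For the first direction ($\frcnn\in\rcnn$), I would take the solution $\Ahat$, which satisfies $\lncs{\Ahat}\le R$, and form its SVD $\Ahat=\sum_j s_j p_j q_j^\top$. Setting $u_j=\sqrt{s_j}\,p_j$ and $\alpha_j=\sqrt{s_j}\,q_j$ gives $\Ahat=\sum_j u_j\alpha_j^\top$ with $\sum_j\ltwos{u_j}\ltwos{\alpha_j}=\sum_j s_j=\lncs{\Ahat}\le R$. Using that the $p$-th row of $Z(x)$ equals $Q^\dagger v(z_p(x))$, the predictor expands as $\tr(Z(x)\Ahat)=\sum_j\sum_p\alpha_{j,p}\inprod{Q^\dagger v(z_p(x))}{u_j}=\sum_j\sum_p\alpha_{j,p}h_j(z_p(x))$, where $h_j$ is the RKHS filter with weight $u_j$, exactly as in equation~\eqref{eqn:filter-recovery}. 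Since (see below) we may assume the columns of $\Ahat$ lie in $\mathrm{range}(Q^\top)$, each $u_j=Q^\top\beta_j$ for some $\beta_j$, and equation~\eqref{eqn:evaluate-hilbert-norm} then yields $\norms{h_j}_{\Hset}=\ltwos{Q^\top\beta_j}=\ltwos{u_j}$. Hence $\sum_j\ltwos{\alpha_j}\norms{h_j}_{\Hset}=\lncs{\Ahat}\le R$, so $\frcnn\in\rcnn$.

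For the converse I would show that every $g\in\rcnn$ has the same empirical risk as a feasible matrix $A_g$. Writing $g(x)=\sum_j\sum_p\alpha_{j,p}h_j(z_p(x))$ with $\sum_j\ltwos{\alpha_j}\norms{h_j}_{\Hset}\le R$, I invoke the representer theorem to replace each $h_j$ by its projection onto the span of the training features $\{\varphi(z_p(x_i))\}$: this leaves every value $h_j(z_p(x_i))$ unchanged while not increasing $\norms{h_j}_{\Hset}$, and represents the projected filter through a vector $\beta_j$ with $\ltwos{Q^\top\beta_j}\le\norms{h_j}_{\Hset}$ by equation~\eqref{eqn:evaluate-hilbert-norm}. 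Setting $A_g=\sum_j(Q^\top\beta_j)\alpha_j^\top$, subadditivity of the nuclear norm gives $\lncs{A_g}\le\sum_j\ltwos{Q^\top\beta_j}\ltwos{\alpha_j}\le R$, so $A_g$ is feasible, and by equation~\eqref{eqn:final-representation-of-filters} we have $g(x_i)=\tr(Z(x_i)A_g)$ at every training point. Optimality of $\Ahat$ over the nuclear-norm ball then yields $\LossTil(\Ahat)\le\LossTil(A_g)=\sum_i\Loss(g(x_i);y_i)$. Since $\LossTil(\Ahat)=\sum_i\Loss(\frcnn(x_i);y_i)$, since $g\in\rcnn$ was arbitrary, and since $\frcnn\in\rcnn$ by the first direction, this is precisely the claimed inclusion.

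The main obstacle, and the place I would be most careful, is the exact matching of the two norm constraints in both directions, which is where the variational form of the nuclear norm does the essential work: the SVD supplies a decomposition achieving $\sum_j\ltwos{u_j}\ltwos{\alpha_j}=\lncs{\Ahat}$ (achievability, needed to place $\frcnn$ inside $\rcnn$), while subadditivity supplies the reverse bound (needed to make each $A_g$ feasible). A secondary subtlety is justifying that one may assume the columns of the relevant matrices lie in $\mathrm{range}(Q^\top)$, so that the identity $\norms{h_j}_{\Hset}=\ltwos{Q^\top\beta_j}$ applies verbatim; this follows because $\tr(Z(x_i)A)$ depends only on the projection of the columns of $A$ onto $\mathrm{range}(Q^\top)$ (the rows of $Z(x_i)$ lie there), and such a projection does not increase $\lncs{A}$, together with the representer-theorem projection invoked in the converse direction.
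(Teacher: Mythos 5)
Your proof is correct and takes essentially the same route as the paper's: both directions rest on the variational characterization of the nuclear norm (SVD achievability to place $\frcnn$ in $\rcnn$, subadditivity for feasibility of $A_g$), combined with the representer-theorem reduction of RKHS filters to the span of the training features and the Hilbert-norm identity~\eqref{eqn:evaluate-hilbert-norm}. The only cosmetic differences are that you match \emph{every} $g\in\rcnn$ with a feasible matrix of equal empirical risk (thereby sidestepping the paper's implicit assumption that an empirical risk minimizer of $\rcnn$ exists), and that you justify $\norms{h_j}_{\Hset}=\ltwos{u_j}$ via a $\mathrm{range}(Q^\top)$ reduction where the paper simply uses the projection bound $\ltwos{Q^\top(Q^\top)^\dagger w_j}\leq \ltwos{w_j}$.
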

\noindent See Appendix~\ref{sec:proof-erm} for the proof. \\

Our third lemma shows that the function class $\rcnn$ is not ``too
big'', which we do by upper bounding its Rademacher complexity.  The
Rademacher complexity of a function class \mbox{$\mathcal{F} = \{f:
  \mathcal{X} \to \R \}$} with respect to $n$ i.i.d.~samples
$\{X_i\}_{i=1}^n$ is given by
\begin{align*}
\PopRade(\mathcal{F}) \defeq \E_{X, \epsilon} \left[ \sup_{f\in
    \mathcal{F}} \frac{1}{n} \sum_{i=1}^n \epsilon_i f(X_i) \right],
\end{align*}
 where $\{\epsilon_i\}_{i=1}^n$ are
an i.i.d. sequence of uniform $\{-1, +1\}$-valued variables.
Rademacher complexity plays an important role in empirical process
theory, and in particular can be used to bound the generalization loss
of our empirical risk minimization problem.  We refer the reader
to~\citet{bartlett2003rademacher} for an introduction to the
theoretical properties of Rademacher complexity.

The following lemma involves the kernel matrix $K(x)\in \R^{P\times
  P}$ whose $(i,j)$-th entry is equal to $\kf(z_i(x), z_j(x))$, as
well as the expectation $\E[\ltwos{K(X)}]$ of the spectral norm of
this matrix when $X$ is drawn randomly.

\begin{lemma}
\label{lemma:rademacher}
There is a universal constant $c$ such that
\begin{align}
\label{EqnRademacherBound}
\PopRade(\rcnn) \leq \frac{c\; C_\sigma(B_1)B_2 r \sqrt{\log(n P)
    \E[\ltwos{K(X)}]}}{\sqrt{n} }.
\end{align}
\end{lemma}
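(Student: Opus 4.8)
The plan is to linearize $\rcnn$ via the feature-map representation of the RKHS, reduce its Rademacher complexity to the spectral norm of a random feature matrix by nuclear/spectral duality, and then control that spectral norm with a matrix concentration inequality.

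First I would write each filter $h_j\in\Hset$ as $h_j(z)=\inprod{\wdiamond_j}{\varphi(z)}$, where $\varphi:\R^{d_1}\to\ell^2(\N)$ is the feature map satisfying $\kf(z,z')=\inprod{\varphi(z)}{\varphi(z')}$ and $\ltwos{\wdiamond_j}=\hnorms{h_j}$ (the representation already used in Appendix~\ref{sec:nonlinear-activation}). Stacking the patch features into the $P\times\infty$ operator $\Phi(x)$ whose $p$-th row is $\varphi(z_p(x))^\top$, every $f\in\rcnn$ takes the form $f(x)=\tr(\Phi(x)A)$ with $A\defeq\sum_j\wdiamond_j\alpha_j^\top$. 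The defining constraint of $\rcnn$ then gives $\lncs{A}\le\sum_j\ltwos{\wdiamond_j}\ltwos{\alpha_j}=\sum_j\hnorms{h_j}\ltwos{\alpha_j}\le R$, where $R\defeq C_\sigma(B_1)B_2 r$, so that $\rcnn$ embeds into the linear class $\{x\mapsto\tr(\Phi(x)A):\lncs{A}\le R\}$. Next I would pull the supremum through the Rademacher average using the duality between nuclear and spectral norms: setting $M\defeq\frac1n\sum_{i=1}^n\epsilon_i\Phi(X_i)$, we have $\frac1n\sum_i\epsilon_i\tr(\Phi(X_i)A)=\tr(MA)$ and $\sup_{\lncs{A}\le R}\tr(MA)=R\,\ltwos{M}$, hence $\PopRade(\rcnn)\le R\,\E_{X,\epsilon}[\ltwos{M}]$.

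The crux is to bound $\E_\epsilon[\ltwos{M}]$ conditionally on $X$ via the matrix (non-commutative) Khintchine inequality for the Rademacher sum $\sum_i\epsilon_i B_i$ with $B_i\defeq\frac1n\Phi(X_i)$. The observation that ties the bound to the kernel is $\Phi(X_i)\Phi(X_i)^\top=K(X_i)$, so that the variance proxy is $\ltwos{\sum_i B_iB_i^\top}=\frac1{n^2}\ltwos{\sum_i K(X_i)}\le\frac1{n^2}\sum_i\ltwos{K(X_i)}$. Khintchine then yields $\E_\epsilon[\ltwos{M}]\le\frac{c\sqrt{\log(nP)}}{n}\big(\sum_i\ltwos{K(X_i)}\big)^{1/2}$; taking $\E_X$ and applying Jensen's inequality (concavity of $\sqrt{\cdot}$) turns $\E_X\big(\sum_i\ltwos{K(X_i)}\big)^{1/2}$ into $\sqrt{n\,\E[\ltwos{K(X)}]}$, and assembling the three steps produces $\PopRade(\rcnn)\le \frac{cR\sqrt{\log(nP)\,\E[\ltwos{K(X)}]}}{\sqrt n}$, as claimed.

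The main obstacle is the matrix concentration step, since $M$ lives in an infinite-dimensional feature space and the $\log(\cdot)$ dimension factor in Khintchine is a priori ill-defined. I would resolve this by noting that $M$ is supported on the span of $\{\varphi(z_p(X_i))\}_{(i,p)\in[n]\times[P]}$, a subspace of dimension at most $nP$; expressing the operators $\Phi(X_i)$ in an orthonormal basis of this subspace turns them into genuine $P\times nP$ matrices, so the effective ambient dimension is $\order(nP)$ and the inequality produces the $\sqrt{\log(nP)}$ factor. Using the sharp variance proxy $\ltwos{\sum_i B_iB_i^\top}$ rather than the cruder Frobenius bound $\sum_i\lfs{\Phi(X_i)}^2$ is precisely what yields the dependence on $\sqrt{\E[\ltwos{K(X)}]}$, the quantity responsible for the $\sqrt{P}$ improvement over the non-shared bound discussed after the theorem.
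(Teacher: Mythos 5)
Your proposal is correct, and its first two steps---representing each $f\in\rcnn$ as $x\mapsto\tr(\Phi(x)A)$ with $\lncs{A}\leq R \defeq C_\sigma(B_1)B_2r$, then passing to $R\,\E\ltwos{\sum_i\epsilon_i\Phi(X_i)}/n$ by nuclear/spectral duality, and finishing with Jensen---are exactly the paper's reduction. Where you genuinely diverge is the concentration step and the treatment of infinite dimension. The paper truncates the feature map: it splits $\Psi(x_i)$ into the first $d$ coordinates $\Psi^{(d)}(x_i)$ and a tail $\Psi^{(-d)}(x_i)$, controls the tail in Frobenius norm using the \emph{uniform} convergence of the Mercer expansion (this is why the proof invokes Mercer's theorem rather than an arbitrary feature map), and then applies Minsker's dimension-free matrix Bernstein inequality, in which the logarithmic factor is $\sqrt{\log(nC_1)}$ with $C_1$ a bound on $\tr(\Psi^{(d)}(x_i)\Psi^{(d)}(x_i)^\top)\leq\tr(K(x_i))\leq P$; taking $d\to\infty$ recovers $\sqrt{\log(nP)}$. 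You instead observe that, conditionally on $X$, the random operator is supported on the at most $nP$-dimensional span of $\{\varphi(z_p(X_i))\}$, restrict to an orthonormal basis of that span to get honest $P\times nP$ matrices, and apply the standard rectangular matrix Khintchine inequality, whose ambient-dimension factor $\sqrt{\log(P+nP)}$ gives the same $\sqrt{\log(nP)}$. Your route is more elementary: it needs no limiting argument and no uniform Mercer convergence (any feature map with $\hnorms{h}=\ltwos{\beta}$ suffices), while the paper's route makes explicit that the relevant quantity is a trace/effective dimension rather than an ambient one---a distinction that matters if one wants bounds not mediated by the finite span. Both identify $\Phi(X_i)\Phi(X_i)^\top=K(X_i)$ as the mechanism producing $\E[\ltwos{K(X)}]$, and both use the same crude bound $\ltwos{\sum_iK(X_i)}\leq\sum_i\ltwos{K(X_i)}$ before Jensen. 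One small gap to patch in your write-up: rectangular matrix Khintchine requires the maximum of \emph{both} Gram terms, $\ltwos{\sum_iB_iB_i^\top}^{1/2}$ and $\ltwos{\sum_iB_i^\top B_i}^{1/2}$, and you only computed the first; this is harmless here because $\ltwos{\sum_iB_i^\top B_i}\leq\sum_i\ltwos{B_i^\top B_i}=\sum_i\ltwos{B_iB_i^\top}=n^{-2}\sum_i\ltwos{K(X_i)}$, the same quantity you already use, but the sentence should be added.
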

\noindent See Appendix~\ref{sec:proof-rademacher} for the proof of
this claim. \\

Combining Lemmas~\ref{lemma:rcnn-contains-cnn}
through~\ref{lemma:rademacher} allows us to compare the CCNN predictor
$\frcnn$ against the best model in the CNN class.
Lemma~\ref{lemma:erm} shows that $\frcnn$ is the empirical risk
minimizer within function class $\rcnn$. Thus, the theory of
Rademacher complexity~\cite{bartlett2003rademacher} guarantees that
\begin{align}
\label{EqnChickenSoup}
\E[\Loss(\rcnn(X); Y)] \leq \inf_{f\in \rcnn} \E[\Loss(f(x); y)] +
2L\cdot \PopRade(\rcnn) + \frac{c}{\sqrt{n}},
\end{align}
 where $c$ is a universal constant.  By
Lemma~\ref{lemma:rcnn-contains-cnn}, we have
\begin{align*}
\inf_{f\in \rcnn} \E[\Loss(f(X); Y)] \leq \inf_{f\in \cnn}
\E[\Loss(f(X); Y)].
\end{align*}
Plugging this upper bound into inequality~\eqref{EqnChickenSoup} and
applying Lemma~\ref{lemma:rademacher} completes the proof.


\subsection{Proof of Lemma~\ref{lemma:rcnn-contains-cnn}}
\label{AppLemContain}

With the activation functions specified in the lemma statement,
Lemma~\ref{lemma:ipk-contains-filter} and Lemma~\ref{lemma:gaussian-contains-filter} show that there is a quantity
$C_\sigma(B_1)$, such any filter of CNN belongs to the reproducing
kernel Hilbert space $\Hset$ and its Hilbert norm is bounded by
$C_\sigma(B_1)$. As a consequence, any function $f\in \cnn$ can be
represented by
\begin{align*}
f(x) \defeq \sum_{j=1}^{r} \sum_{p=1}^{P} \alpha_{j,p} h_j(z_p(x))
\quad \mbox{where $\norms{h_j}_{\Hset}\leq C_\sigma(B_1)$ and
  $\ltwos{\alpha_j}\leq B_2$}.
\end{align*}
It is straightforward to verify that function $f$ satisfies the
constraint in equation~\eqref{eqn:rcnn-class}, and consequently
belongs to $\rcnn$.


\subsection{Proof of Lemma~\ref{lemma:erm}}
\label{sec:proof-erm}

Let $\mathcal{C}_R$ denote the function class $\big\{ x \mapsto
\tr(Z(x)A) :\; \lncs{A}\leq R \big \}$. We first prove that
$\mathcal{C}_R \subset \rcnn$. Consider an arbitrary function $f_A(x)
\defn \tr(Z(x)A)$ belonging to $\mathcal{C}_R$.  Note that the matrix
$A$ has a singular value decomposition (SVD) $A = \sum_{j=1}^{r^*}
\lambda_j w_j u_j^\top$ for some ${r^*} < \infty$, where $w_j$ and
$u_j$ are unit vectors and $\lambda_j$ are real numbers. Using this
notation, the function $f_A$ can be represented as the sum
\begin{align*}
f_A(x) = \sum_{j=1}^{r^*} \lambda_j u_{j}^\top Z(x) w_j.
\end{align*}
Let $v(z)$ be an $n P$-dimensional vector whose $(i,p)$-th coordinate
is equal to $\kf(z, z_p(x_i))$. Then $Q^\dagger v(z_p(x))$ is the
$p$-th row of matrix $Z(x)$. Letting $h_j$ denote the mapping $z
\mapsto \langle Q^\dagger v(z), w_j\rangle$, we have
\begin{align*}
f_A(x) = \sum_{j=1}^{r^*} \sum_{p=1}^{P} \lambda_j u_{j,p}
h_j(z_p(x)).
\end{align*}
The function $h_j$ can also be written as $z
\mapsto \langle (Q^\top)^\dagger w_j, v(z)\rangle$. Equation~\eqref{eqn:evaluate-hilbert-norm} implies that the Hilbert norm of this function is equal to 
$\norms{Q^\top (Q^\top)^\dagger w_j}_2$, which is bounded by $\ltwos{w_j} = 1$. 
Thus we have
\begin{align*}
  \sum_{j=1}^{r^*} \ltwos{\lambda_j u_j} \norms{h_j}_{\Hset} =
  \sum_{j=1}^{r^*} |\lambda_j| = \lncs{A} \leq C_\sigma(B_1) B_2
  r,
\end{align*}
which implies that $f_A \in \rcnn$.

Next, it suffices to prove that for some empirical risk minimizer $f$
in function class $\rcnn$, it also belongs to the function class
$\mathcal{C}_R$. Recall that any function $f\in \rcnn$ can be
represented in the form
\begin{align*}
f(x) = \sum_{j=1}^{r^*} \sum_{p=1}^{P} \alpha_{j,p} h_j(z_p(x)).
\end{align*}
where the filter $h_j$ belongs to the RKHS. Let $\varphi: \R^{d_1}\to \ell^2(\N)$ be a feature map of the kernel function. The function $h_j(z)$ can be represented by $\inprod{\wdiamond}{\varphi(z)}$ for some $\wdiamond\in \ell^2(\N)$. 
In Appendix~\ref{sec:nonlinear-activation}, we have shown that any function taking this form can be replaced by $\inprod{(Q^\top)^\dagger Q^\top\beta_j}{v(z)}$ for some vector $\beta_j\in \R^{n P}$ without changing the output on the training data. Thus, there exists at least one empirical risk minimizer $f$ of $\rcnn$ such that all of its filters take the form 
\begin{align}\label{eqn:all-filters-take-the-form}
	h_j(z) = \inprod{(Q^\top)^\dagger Q^\top\beta_j}{v(z)}. 
\end{align}
By equation~\eqref{eqn:evaluate-hilbert-norm}, the Hilbert norm of these filters satisfy:
\begin{align*}
\norms{h_j}_{\Hset} = \ltwos{Q^\top (Q^\top)^\dagger Q^\top
  \beta_j} = \ltwos{Q^\top \beta_j}.
\end{align*}
According to Appendix~\ref{sec:nonlinear-activation}, if all filters take the form~\eqref{eqn:all-filters-take-the-form}, then the function $f$ can be represented by $\tr(Z(x)A)$ for matrix $A
\defeq \sum_{j=1}^{r^*} Q^\top \beta_j \alpha_j^\top$. Consequently,
the nuclear norm is bounded as
\begin{align*}
\lncs{A} \leq \sum_{j=1}^{r^*} \ltwos{\alpha_j}\ltwos{Q^\top \beta_j}
= \sum_{j=1}^{r^*} \ltwos{\alpha_j}\norms{h_j}_{\Hset} \leq
C_\sigma(B_1) B_2 r = R,
\end{align*}
which establishes that the function $f$ belongs to the function
class $\mathcal{C}_R$.


\subsection{Proof of Lemma~\ref{lemma:rademacher}}
\label{sec:proof-rademacher}

Throughout this proof, we use the shorthand notation $R \defeq
C_\sigma(B_1)B_2 r$. Recall that any function $f\in \rcnn$ can be
represented in the form
\begin{align}\label{eqn:rademacher-proof-function-expansion}
f(x) = \sum_{j=1}^{r^*} \sum_{p=1}^{P} \alpha_{j,p} h_j(z_p(x)) \quad
\mbox{where $h_j\in \Hset$},
\end{align}
and the Hilbert space $\Hset$ is induced by the kernel function
$\kf$. Since any patch $z$ belongs to the compact space $\{z:
\ltwos{z}\leq 1\}$ and $\kf$ is a continuous kernel satisfying
$\kf(z,z) \leq 1$, Mercer's theorem~\cite[][Theorem 4.49]{SteChr08} implies that there is a feature map $\varphi: \R^{d_1}\to \ell^2(\N)$ such that $\sum_{\ell = 1}^\infty \varphi_\ell(z)\varphi_\ell(z')$ converges uniformly and absolutely to $\kf(z,z')$.
Thus, we can write $\kf(z,z') = \inprod{\varphi(z)}{\varphi(z')}$. Since $\varphi$ 
is a feature map, every any function $h\in \Hset$ can be written as $h(z) = \inprod{\beta}{\varphi(z)}$ for some $\beta\in \ell^2(\mathbb{N})$, and the Hilbert norm of $h$ is equal to $\ltwos{\beta}$.

Using this notation, we can write the filter $h_j$ in
equation~\eqref{eqn:rademacher-proof-function-expansion} as $h_j(z) =
\langle \beta_j, \varphi(z)\rangle$ for some vector $\beta_j \in
\ell^2(\mathbb{N})$, with Hilbert norm $\hnorms{h_j} = \ltwos{\beta_j}$.  For each $x$, let $\Psi(x)$ denote the linear
operator that maps any sequence $\theta \in \ell^2(\mathbb{N})$ to the
vector in $\real^P$ with elements
\begin{align*}
\begin{bmatrix} \inprod{\theta}{\varphi(z_1(x))} & \ldots & \inprod{\theta}{\varphi(z_P(x))} 
\end{bmatrix}^T.
\end{align*}
Informally, we can think of $\Psi(x)$ as a matrix whose $p$-th row is
equal to $\varphi(z_p(x))$.  The function $f$ can then be written as
\begin{align}
\label{eqn:fx-expr}
f(x_i) = \sum_{j=1}^{{r^*}} \alpha_j^\top \Psi(x_i) \beta_j = \tr
\left(\Psi(x_i)\Big(\sum_{j=1}^{{r^*}} \beta_j \alpha_j^\top
\Big)\right).
\end{align}
The matrix $\sum_{j=1}^{{r^*}}
\beta_j \alpha_j^\top$ satisfies the constraint
\begin{align}
\label{eqn:fx-matrix-constraint}
\norm{\sum_{j=1}^{{r^*}} \beta_j \alpha_j^\top}_* \leq
\sum_{j=1}^{r^*} \ltwos{\alpha_j}\cdot \ltwos{\beta_j} =
\sum_{j=1}^{r^*} \ltwos{\alpha_j}\cdot \norms{h_j}_{\Hset} \leq R.
\end{align}
Combining equation~\eqref{eqn:fx-expr} and
inequality~\eqref{eqn:fx-matrix-constraint}, we find that the
Rademacher complexity is bounded by
\begin{align}
\label{eqn:rad-expand}
R_{n}(\rcnn) = \frac{1}{n}\E\left[\sup_{f\in \rcnn} \sum_{i=1}^{n}
  \epsilon_i f(x_i)\right] & \leq \frac{1}{n}\E\left[\sup_{A:
    \lncs{A}\leq R} \tr\left( \Big(\sum_{i=1}^{n} \epsilon_i
  \Psi(x_i)\Big) A\right) \right]\nonumber\\ 
&= \frac{R}{n} \E\left[ \norm{\sum_{i=1}^{n} \epsilon_i \Psi(x_i)}_2
  \right],
\end{align}
where the last equality uses H\"{o}lder's inequality---that is, the
duality between the nuclear norm and the spectral norm.

As noted previously, we may think informally of the quantity
$\sum_{i=1}^{n} \epsilon_i \Psi(x_i)$ as a matrix with $P$ rows and
infinitely many columns. Let $\Psi^{(d)}(x_i)$ denote the submatrix
consisting of the first $d$ columns of $\Psi(x_i)$ and let
$\Psi^{(-d)}(x_i)$ denote the remaining sub-matrix. We have
\begin{align*}
\E\left[ \norm{\sum_{i=1}^{n} \epsilon_i \Psi(x_i)}_2\right] &\leq
\E\left[\norm{\sum_{i=1}^{n} \epsilon_i \Psi^{(d)}(x_i)}_2\right] +
\left(\E\left[\norm{\sum_{i=1}^{n} \epsilon_i
    \Psi^{(-d)}(x_i)}_F^2\right]\right)^{1/2}\\ &\leq
\E\left[\norm{\sum_{i=1}^{n} \epsilon_i \Psi^{(d)}(x_i)}_2\right] +
\left(n P\cdot \E\left[\sum_{\ell=d+1}^\infty
  \varphi^2_{\ell}(z)\right]\right)^{1/2}.
\end{align*}
Since $\sum_{\ell=1}^\infty \varphi^2_{\ell}(z)$ uniformly converges to $\kf(z,z)$,
the second term on the right-hand side converges to zero as $d\to
\infty$. Thus it suffices to bound the first term and take the
limit.  In order to upper bound the spectral norm
$\norm{\sum_{i=1}^{n} \epsilon_i \Psi^{(d)}(x_i)}_2$, we use a matrix
Bernstein inequality due to~\citet[][Theorem 2.1]{minsker2011some}. In
particular, whenever $\tr(\Psi^{(d)}(x_i)(\Psi^{(d)}(x_i))^\top) \leq C_1$,
there is a universal constant $c$ such that the expected spectral
norm is upper bounded as
\begin{align*}
\E \left[\norm{\sum_{i=1}^{n} \epsilon_i \Psi^{(d)}(x_i)}_2\right] & \leq c
\; \sqrt{\log(n C_1)} \E\Big[\Big(\sum_{i=1}^n
  \ltwos{\Psi^{(d)}(x_i)(\Psi^{(d)}(x_i))^\top}\Big)^{1/2}\Big]\\
& \leq c \; \sqrt{\log(n C_1)} \Big( n
\E[\ltwos{\Psi(X)\Psi^\top(X)}]\Big)^{1/2}.
\end{align*}
Note that the uniform kernel expansion $\kf(z,z') =
  \sum_{\ell = 1}^\infty \varphi_\ell(z)\varphi_\ell(z')$ implies the trace norm bound
\mbox{$\tr(\Psi^{(d)}(x_i)(\Psi^{(d)}(x_i))^\top) \leq \tr(K(x_i))$.}
Since all patches are contained in the unit $\ell_2$-ball, the kernel
function $\kf$ is uniformly bounded by $1$, and hence $C_1\leq
P$. Taking the limit $d\to\infty$, we find that
\begin{align*}
\E \left[\norm{\sum_{i=1}^{n} \epsilon_i \Psi(x_i)}_2\right] \leq c \;
\sqrt{\log(n P)} \Big( n \E[\ltwos{K(X)}]\Big)^{1/2}.
\end{align*}
Finally, substituting this upper bound into
inequality~\eqref{eqn:rad-expand} yields the claimed
bound~\eqref{EqnRademacherBound}.


\bibliographystyle{abbrvnat}
  \bibliography{bib}

\begin{thebibliography}{48}
\providecommand{\natexlab}[1]{#1}
\providecommand{\url}[1]{\texttt{#1}}
\expandafter\ifx\csname urlstyle\endcsname\relax
  \providecommand{\doi}[1]{doi: #1}\else
  \providecommand{\doi}{doi: \begingroup \urlstyle{rm}\Url}\fi

\bibitem[Aslan et~al.(2013)Aslan, Cheng, Zhang, and
  Schuurmans]{aslan2013convex}
{\"O}.~Aslan, H.~Cheng, X.~Zhang, and D.~Schuurmans.
\newblock Convex two-layer modeling.
\newblock In \emph{Advances in Neural Information Processing Systems}, pages
  2985--2993, 2013.

\bibitem[Aslan et~al.(2014)Aslan, Zhang, and Schuurmans]{aslan2014convex}
{\"O}.~Aslan, X.~Zhang, and D.~Schuurmans.
\newblock Convex deep learning via normalized kernels.
\newblock In \emph{Advances in Neural Information Processing Systems}, pages
  3275--3283, 2014.

\bibitem[Bach(2014)]{bach2014breaking}
F.~Bach.
\newblock Breaking the curse of dimensionality with convex neural networks.
\newblock \emph{arXiv preprint arXiv:1412.8690}, 2014.

\bibitem[Bartlett and Mendelson(2003)]{bartlett2003rademacher}
P.~L. Bartlett and S.~Mendelson.
\newblock Rademacher and {G}aussian complexities: Risk bounds and structural
  results.
\newblock \emph{The Journal of Machine Learning Research}, 3:\penalty0
  463--482, 2003.

\bibitem[Bengio et~al.(2005)Bengio, Roux, Vincent, Delalleau, and
  Marcotte]{bengio2005convex}
Y.~Bengio, N.~L. Roux, P.~Vincent, O.~Delalleau, and P.~Marcotte.
\newblock Convex neural networks.
\newblock In \emph{Advances in Neural Information Processing Systems}, pages
  123--130, 2005.

\bibitem[Blum and Rivest(1992)]{blum1992training}
A.~L. Blum and R.~L. Rivest.
\newblock Training a 3-node neural network is {NP}-complete.
\newblock \emph{Neural Networks}, 5\penalty0 (1):\penalty0 117--127, 1992.

\bibitem[Bottou(1998)]{bottou1998online}
L.~Bottou.
\newblock Online learning and stochastic approximations.
\newblock \emph{On-line learning in neural networks}, 17\penalty0 (9):\penalty0
  142, 1998.

\bibitem[Bruna and Mallat(2013)]{bruna2013invariant}
J.~Bruna and S.~Mallat.
\newblock Invariant scattering convolution networks.
\newblock \emph{Pattern Analysis and Machine Intelligence, IEEE Transactions
  on}, 35\penalty0 (8):\penalty0 1872--1886, 2013.

\bibitem[Chan et~al.(2015)Chan, Jia, Gao, Lu, Zeng, and Ma]{chan2015pcanet}
T.-H. Chan, K.~Jia, S.~Gao, J.~Lu, Z.~Zeng, and Y.~Ma.
\newblock Pcanet: A simple deep learning baseline for image classification?
\newblock \emph{IEEE Transactions on Image Processing}, 24\penalty0
  (12):\penalty0 5017--5032, 2015.

\bibitem[Chen and Manning(2014)]{chen2014fast}
D.~Chen and C.~D. Manning.
\newblock A fast and accurate dependency parser using neural networks.
\newblock In \emph{Proceedings of the 2014 Conference on Empirical Methods in
  Natural Language Processing (EMNLP)}, volume~1, pages 740--750, 2014.

\bibitem[Choromanska et~al.(2014)Choromanska, Henaff, Mathieu, Arous, and
  LeCun]{choromanska2014loss}
A.~Choromanska, M.~Henaff, M.~Mathieu, G.~B. Arous, and Y.~LeCun.
\newblock The loss surface of multilayer networks.
\newblock \emph{ArXiv:1412.0233}, 2014.

\bibitem[Coates et~al.(2010)Coates, Lee, and Ng]{coates2010analysis}
A.~Coates, H.~Lee, and A.~Y. Ng.
\newblock An analysis of single-layer networks in unsupervised feature
  learning.
\newblock \emph{Ann Arbor}, 1001\penalty0 (48109):\penalty0 2, 2010.

\bibitem[Daniely et~al.(2016)Daniely, Frostig, and Singer]{daniely2016toward}
A.~Daniely, R.~Frostig, and Y.~Singer.
\newblock Toward deeper understanding of neural networks: The power of
  initialization and a dual view on expressivity.
\newblock \emph{arXiv preprint arXiv:1602.05897}, 2016.

\bibitem[Dauphin et~al.(2014)Dauphin, Pascanu, Gulcehre, Cho, Ganguli, and
  Bengio]{dauphin2014identifying}
Y.~N. Dauphin, R.~Pascanu, C.~Gulcehre, K.~Cho, S.~Ganguli, and Y.~Bengio.
\newblock Identifying and attacking the saddle point problem in
  high-dimensional non-convex optimization.
\newblock In \emph{Advances in Neural Information Processing Systems}, pages
  2933--2941, 2014.

\bibitem[Drineas and Mahoney(2005)]{drineas2005nystrom}
P.~Drineas and M.~W. Mahoney.
\newblock On the {N}ystr{\"o}m method for approximating a {G}ram matrix for
  improved kernel-based learning.
\newblock \emph{The Journal of Machine Learning Research}, 6:\penalty0
  2153--2175, 2005.

\bibitem[Duchi et~al.(2008)Duchi, Shalev-Shwartz, Singer, and
  Chandra]{duchi2008efficient}
J.~Duchi, S.~Shalev-Shwartz, Y.~Singer, and T.~Chandra.
\newblock Efficient projections onto the $\ell_1$-ball for learning in high
  dimensions.
\newblock In \emph{Proceedings of the 25th International Conference on Machine
  Learning}, pages 272--279. ACM, 2008.

\bibitem[Duchi et~al.(2011)Duchi, Hazan, and Singer]{duchi2011adaptive}
J.~Duchi, E.~Hazan, and Y.~Singer.
\newblock Adaptive subgradient methods for online learning and stochastic
  optimization.
\newblock \emph{The Journal of Machine Learning Research}, 12:\penalty0
  2121--2159, 2011.

\bibitem[Eldar and Kutyniok(2012)]{eldar2012compressed}
Y.~C. Eldar and G.~Kutyniok.
\newblock \emph{Compressed sensing: theory and applications}.
\newblock Cambridge University Press, 2012.

\bibitem[Fahlman(1988)]{fahlman1988empirical}
S.~E. Fahlman.
\newblock An empirical study of learning speed in back-propagation networks.
\newblock \emph{Journal of Heuristics}, 1988.

\bibitem[Haeffele and Vidal(2015)]{haeffele2015global}
B.~D. Haeffele and R.~Vidal.
\newblock Global optimality in tensor factorization, deep learning, and beyond.
\newblock \emph{arXiv preprint arXiv:1506.07540}, 2015.

\bibitem[Hinton et~al.(2012)Hinton, Deng, Yu, Dahl, Mohamed, Jaitly, Senior,
  Vanhoucke, Nguyen, Sainath, et~al.]{hinton2012deep}
G.~Hinton, L.~Deng, D.~Yu, G.~E. Dahl, A.-r. Mohamed, N.~Jaitly, A.~Senior,
  V.~Vanhoucke, P.~Nguyen, T.~N. Sainath, et~al.
\newblock Deep neural networks for acoustic modeling in speech recognition: The
  shared views of four research groups.
\newblock \emph{Signal Processing Magazine, IEEE}, 29\penalty0 (6):\penalty0
  82--97, 2012.

\bibitem[Isa et~al.(2010)Isa, Saad, Omar, Osman, Ahmad, and
  Sakim]{isa2010suitable}
I.~Isa, Z.~Saad, S.~Omar, M.~Osman, K.~Ahmad, and H.~M. Sakim.
\newblock Suitable mlp network activation functions for breast cancer and
  thyroid disease detection.
\newblock In \emph{2010 Second International Conference on Computational
  Intelligence, Modelling and Simulation}, pages 39--44, 2010.

\bibitem[Janzamin et~al.(2015)Janzamin, Sedghi, and
  Anandkumar]{janzamin2015generalization}
M.~Janzamin, H.~Sedghi, and A.~Anandkumar.
\newblock Generalization bounds for neural networks through tensor
  factorization.
\newblock \emph{ArXiv:1506.08473}, 2015.

\bibitem[Krizhevsky and Hinton(2009)]{krizhevsky2009learning}
A.~Krizhevsky and G.~Hinton.
\newblock Learning multiple layers of features from tiny images.
\newblock \emph{Master Thesis}, 2009.

\bibitem[Krizhevsky et~al.(2012)Krizhevsky, Sutskever, and
  Hinton]{krizhevsky2012imagenet}
A.~Krizhevsky, I.~Sutskever, and G.~E. Hinton.
\newblock Imagenet classification with deep convolutional neural networks.
\newblock In \emph{Advances in Neural Information Processing Systems}, pages
  1097--1105, 2012.

\bibitem[Lawrence et~al.(1997)Lawrence, Giles, Tsoi, and
  Back]{lawrence1997face}
S.~Lawrence, C.~L. Giles, A.~C. Tsoi, and A.~D. Back.
\newblock Face recognition: A convolutional neural-network approach.
\newblock \emph{Neural Networks, IEEE Transactions on}, 8\penalty0
  (1):\penalty0 98--113, 1997.

\bibitem[Le et~al.(2013)Le, Sarl{\'o}s, and Smola]{le2013fastfood}
Q.~Le, T.~Sarl{\'o}s, and A.~Smola.
\newblock Fastfood-approximating kernel expansions in loglinear time.
\newblock In \emph{Proceedings of the International Conference on Machine
  Learning}, 2013.

\bibitem[LeCun et~al.(1998)LeCun, Bottou, Bengio, and
  Haffner]{lecun1998gradient}
Y.~LeCun, L.~Bottou, Y.~Bengio, and P.~Haffner.
\newblock Gradient-based learning applied to document recognition.
\newblock \emph{Proceedings of the IEEE}, 86\penalty0 (11):\penalty0
  2278--2324, 1998.

\bibitem[Livni et~al.(2014)Livni, Shalev-Shwartz, and
  Shamir]{livni2014computational}
R.~Livni, S.~Shalev-Shwartz, and O.~Shamir.
\newblock On the computational efficiency of training neural networks.
\newblock In \emph{Advances in Neural Information Processing Systems}, pages
  855--863, 2014.

\bibitem[Mairal et~al.(2014)Mairal, Koniusz, Harchaoui, and
  Schmid]{mairal2014convolutional}
J.~Mairal, P.~Koniusz, Z.~Harchaoui, and C.~Schmid.
\newblock Convolutional kernel networks.
\newblock In \emph{Advances in Neural Information Processing Systems}, pages
  2627--2635, 2014.

\bibitem[Minsker(2011)]{minsker2011some}
S.~Minsker.
\newblock On some extensions of {B}ernstein's inequality for self-adjoint
  operators.
\newblock \emph{arXiv preprint arXiv:1112.5448}, 2011.

\bibitem[Mnih et~al.(2015)Mnih, Kavukcuoglu, Silver, Rusu, Veness, Bellemare,
  Graves, Riedmiller, Fidjeland, Ostrovski, et~al.]{mnih2015human}
V.~Mnih, K.~Kavukcuoglu, D.~Silver, A.~A. Rusu, J.~Veness, M.~G. Bellemare,
  A.~Graves, M.~Riedmiller, A.~K. Fidjeland, G.~Ostrovski, et~al.
\newblock Human-level control through deep reinforcement learning.
\newblock \emph{Nature}, 518\penalty0 (7540):\penalty0 529--533, 2015.

\bibitem[Rahimi and Recht(2007)]{rahimi2007random}
A.~Rahimi and B.~Recht.
\newblock Random features for large-scale kernel machines.
\newblock In \emph{Advances in Neural Information Processing Systems}, pages
  1177--1184, 2007.

\bibitem[Safran and Shamir(2015)]{safran2015quality}
I.~Safran and O.~Shamir.
\newblock On the quality of the initial basin in overspecified neural networks.
\newblock \emph{arXiv preprint arXiv:1511.04210}, 2015.

\bibitem[Sedghi and Anandkumar(2014)]{sedghi2014provable}
H.~Sedghi and A.~Anandkumar.
\newblock Provable methods for training neural networks with sparse
  connectivity.
\newblock \emph{ArXiv:1412.2693}, 2014.

\bibitem[Shalev-Shwartz et~al.(2011)Shalev-Shwartz, Shamir, and
  Sridharan]{shalev2011learning}
S.~Shalev-Shwartz, O.~Shamir, and K.~Sridharan.
\newblock Learning kernel-based halfspaces with the 0-1 loss.
\newblock \emph{SIAM Journal on Computing}, 40\penalty0 (6):\penalty0
  1623--1646, 2011.

\bibitem[Silver et~al.(2016)Silver, Huang, Maddison, Guez, Sifre, Van
  Den~Driessche, Schrittwieser, Antonoglou, Panneershelvam, Lanctot,
  et~al.]{silver2016mastering}
D.~Silver, A.~Huang, C.~J. Maddison, A.~Guez, L.~Sifre, G.~Van Den~Driessche,
  J.~Schrittwieser, I.~Antonoglou, V.~Panneershelvam, M.~Lanctot, et~al.
\newblock Mastering the game of {G}o with deep neural networks and tree search.
\newblock \emph{Nature}, 529\penalty0 (7587):\penalty0 484--489, 2016.

\bibitem[Sohn and Lee(2012)]{sohn2012learning}
K.~Sohn and H.~Lee.
\newblock Learning invariant representations with local transformations.
\newblock In \emph{Proceedings of the 29th International Conference on Machine
  Learning (ICML-12)}, pages 1311--1318, 2012.

\bibitem[Sopena et~al.(1999)Sopena, Romero, and Alquezar]{sopena1999neural}
J.~M. Sopena, E.~Romero, and R.~Alquezar.
\newblock Neural networks with periodic and monotonic activation functions: a
  comparative study in classification problems.
\newblock In \emph{ICANN 99}, pages 323--328, 1999.

\bibitem[Srivastava et~al.(2014)Srivastava, Hinton, Krizhevsky, Sutskever, and
  Salakhutdinov]{srivastava2014dropout}
N.~Srivastava, G.~Hinton, A.~Krizhevsky, I.~Sutskever, and R.~Salakhutdinov.
\newblock Dropout: A simple way to prevent neural networks from overfitting.
\newblock \emph{The Journal of Machine Learning Research}, 15\penalty0
  (1):\penalty0 1929--1958, 2014.

\bibitem[Steinwart and Christmann(2008)]{SteChr08}
I.~Steinwart and A.~Christmann.
\newblock \emph{Support vector machines}.
\newblock Springer, New York, 2008.

\bibitem[Tang(2013)]{tang2013deep}
Y.~Tang.
\newblock Deep learning using linear support vector machines.
\newblock \emph{arXiv preprint arXiv:1306.0239}, 2013.

\bibitem[VariationsMNIST()]{WinNT}
VariationsMNIST.
\newblock Variations on the {MNIST} digits.
\newblock
  \url{http://www.iro.umontreal.ca/~lisa/twiki/bin/view.cgi/Public/MnistVariations},
  2007.

\bibitem[Vincent et~al.(2010)Vincent, Larochelle, Lajoie, Bengio, and
  Manzagol]{vincent2010stacked}
P.~Vincent, H.~Larochelle, I.~Lajoie, Y.~Bengio, and P.-A. Manzagol.
\newblock Stacked denoising autoencoders: Learning useful representations in a
  deep network with a local denoising criterion.
\newblock \emph{The Journal of Machine Learning Research}, 11:\penalty0
  3371--3408, 2010.

\bibitem[Wang et~al.(2012)Wang, Wu, Coates, and Ng]{wang2012end}
T.~Wang, D.~J. Wu, A.~Coates, and A.~Y. Ng.
\newblock End-to-end text recognition with convolutional neural networks.
\newblock In \emph{Pattern Recognition (ICPR), 2012 21st International
  Conference on}, pages 3304--3308. IEEE, 2012.

\bibitem[Xiao and Zhang(2014)]{xiao2014proximal}
L.~Xiao and T.~Zhang.
\newblock A proximal stochastic gradient method with progressive variance
  reduction.
\newblock \emph{SIAM Journal on Optimization}, 24\penalty0 (4):\penalty0
  2057--2075, 2014.

\bibitem[Zhang et~al.(2015)Zhang, Lee, Wainwright, and
  Jordan]{zhang2015learning}
Y.~Zhang, J.~D. Lee, M.~J. Wainwright, and M.~I. Jordan.
\newblock Learning halfspaces and neural networks with random initialization.
\newblock \emph{arXiv preprint arXiv:1511.07948}, 2015.

\bibitem[Zhang et~al.(2016)Zhang, Lee, and Jordan]{zhang2015ell_1}
Y.~Zhang, J.~D. Lee, and M.~I. Jordan.
\newblock $\ell_1$-regularized neural networks are improperly learnable in
  polynomial time.
\newblock In \emph{Proceedings on the 33rd International Conference on Machine
  Learning}, 2016.

\end{thebibliography}


\end{document}